\documentclass[letterpaper]{article} 
\usepackage{AAAI_Template/aaai2026}  
\newif\ifarxiv
\arxivtrue
\ifarxiv\nocopyright\fi
\usepackage{times}  
\usepackage{helvet}  
\usepackage{courier}  
\usepackage[hyphens]{url}  
\usepackage{graphicx} 
\usepackage{natbib}  
\usepackage{caption} 
\usepackage{algorithm}
\usepackage{algorithmic}
\usepackage{xspace}
\usepackage{amssymb}
\usepackage{amsmath}
\usepackage{amsthm}
\usepackage{mathtools}
\usepackage{etoc}
\usepackage{cleveref}

\usepackage{tikz}
\usepackage{amsmath, amssymb}
\usetikzlibrary{positioning, arrows.meta, fit, backgrounds, calc}
\usepackage{pgfplots}
\pgfplotsset{compat=1.18}
\usepackage{csvsimple}


\newcommand{\appsubsubsection}[1]{%
  \subsubsection{#1}%
  \addcontentsline{toc}{subsubsection}{\protect\numberline{\thesubparagraph}#1}%
}
\usepackage{newfloat}
\usepackage{listings}
\DeclareCaptionStyle{ruled}{labelfont=normalfont,labelsep=colon,strut=off} 
\floatstyle{ruled}
\newfloat{listing}{tb}{lst}{}
\floatname{listing}{Listing}
\newtheorem{theorem}{Theorem}
\newtheorem{lemma}{Lemma}
\newtheorem{proposition}{Proposition}
\newtheorem{corollary}{Corollary}

\theoremstyle{definition}
\newtheorem{definition}{Definition}

\theoremstyle{remark}
\newtheorem{remark}{Remark}

\newcommand{\dynsys}{\ensuremath{\mathcal{D}}\xspace}
\newcommand{\init}{\ensuremath{\mathbf{I}\xspace}}
\newcommand{\trans}{\ensuremath{\mathbf{F}\xspace}}
\newcommand{\noise}{\ensuremath{\mathbf{E}\xspace}}
\newcommand{\ctrl}{\ensuremath{\mathbf{u}\xspace}}
\newcommand{\timestep}{\ensuremath{\delta}\xspace}
\newcommand{\horizon}{\ensuremath{T}\xspace}
\newcommand{\reach}{\ensuremath{\mathbf{G}\xspace}}
\newcommand{\avoid}{\ensuremath{\mathbf{A}\xspace}}

\newcommand{\state}{\ensuremath{\mathbf{x}\xspace}}
\newcommand{\State}{\ensuremath{\mathbf{X}\xspace}}
\renewcommand{\next}{\ensuremath{\mathit{next}}\xspace}
\newcommand{\err}{\ensuremath{\mathbf{\epsilon}\xspace}}
\newcommand{\conv}{\ensuremath{\textbf{conv}\xspace}}
\newcommand{\aff}{\ensuremath{\textbf{aff}\xspace}}
\newcommand{\svert}{\ensuremath{\textbf{vert}\xspace}}
\newcommand{\trajj}{\ensuremath{\mathit{\tau}}\xspace}
\newcommand{\vleq}{\ensuremath{\preceq}\xspace}
\newcommand{\vgeq}{\ensuremath{\succeq}\xspace}
\newcommand{\ReLU}{\ensuremath{\mathsf{ReLU}}\xspace}

\newcommand{\nats}{\ensuremath{\mathbb{N}}\xspace}
\newcommand{\reals}{\ensuremath{\mathbb{R}}\xspace}

\newcommand{\bset}{\ensuremath{B}\xspace}
\newcommand{\pset}{\ensuremath{P}\xspace}
\newcommand{\gridPt}{\ensuremath{\Vec{p}}\xspace}
\newcommand{\enc}{\ensuremath{\mathcal{E}}\xspace}
\newcommand{\Ltri}{\ensuremath{L_{\Delta}}\xspace}
\newcommand{\Utri}{\ensuremath{U_{\Delta}}\xspace}
\newcommand{\Lc}{\ensuremath{L_{\square}}\xspace}
\newcommand{\Uc}{\ensuremath{U_{\square}}\xspace}
\newcommand{\scomp}{\ensuremath{\Delta}\xspace}
\newcommand{\poly}{\ensuremath{\mathcal{P}}\xspace}
\newcommand{\dom}{\ensuremath{\mathit{dom}}\xspace}
\newcommand{\cells}{\ensuremath{\mathit{Cells}}\xspace}

\newcommand{\netDom}{\ensuremath{\mathcal{X}}\xspace}
\newcommand{\al}{\ensuremath{\mathbf{a}_L}\xspace}
\newcommand{\au}{\ensuremath{\mathbf{a}_U}\xspace}
\newcommand{\bl}{\ensuremath{b_L}\xspace}
\newcommand{\bu}{\ensuremath{b_U}\xspace}
\newcommand{\x}{\ensuremath{\mathbf{x}}\xspace}
\newcommand{\ep}{\ensuremath{\mathbf{\epsilon}}\xspace}
\newcommand{\p}{\ensuremath{\mathbf{p}}\xspace}
\newcommand{\simp}{\ensuremath{\mathcal{S}}\xspace}
\newcommand{\cell}{\ensuremath{\mathcal{C}}\xspace}
\newcommand{\brel}{\ensuremath{B_\square}\xspace}

\newcommand{\clipper}{\textsc{clipper}\xspace}
\newcommand{\bart}{\textsc{bart}\xspace}
\newcommand{\rer}{\textsc{rer}\xspace}
\newcommand{\rail}{\textsc{rail}\xspace}
\newcommand{\muni}{\textsc{muni}\xspace}
\newcommand{\nfg}{\ensuremath{\mathcal{G}^{\dynsys}_\horizon}\xspace}
\title{Closing the Loop: Branch-and-Bound for Scalable Verification of Nonlinear Neural Feedback Systems}
\author{
    I. Samuel Akinwande\textsuperscript{\rm 1},
    Mykel J. Kochenderfer\textsuperscript{\rm 1},
    Clark Barrett\textsuperscript{\rm 2}
}
\affiliations{
    \textsuperscript{\rm 1}Department of Aeronautics and Astronautics, Stanford University\\
    \textsuperscript{\rm 2}Department of Computer Science, Stanford University\\
    Stanford, CA, 94305, USA\\
    samakin@stanford.edu, mykel@stanford.edu, barrett@cs.stanford.edu
}

\usepackage{bibentry}

\begin{document}

\maketitle

\begin{abstract}
Despite recent advances in the verification of nonlinear neural feedback systems, scalability remains the central obstacle, as state-of-the-art solvers do not yet handle the network sizes and nonlinear dynamics of autonomy applications.
Combinatorial solvers do not scale to large networks, whereas propagative solvers excessively sacrifice precision.
This work seeks to improve the scalability of combinatorial solvers by formulating verification as branch-and-bound on an abstraction of the closed-loop system. We introduce \rail, an interface that exposes polyhedral enclosures of the dynamics to LiRPA-style bound propagation, and \clipper, a branch-and-bound algorithm that jointly refines enclosures and splits controller activations.
This framework enables joint reasoning on the computational graph of the closed-loop system, preserving symbolic correlations across time steps.
We present our construction and show that it yields significant improvements over the state of the art.
\end{abstract}

\section{Introduction}
Autonomy is increasingly integral to modern engineered systems, many of which are dynamical systems operating in a closed loop with neural network controllers. Such \emph{neural feedback systems} (NFS) are prevalent in safety-critical domains including aerial navigation~\cite{kaufmann2023champion}, autonomous vehicles~\cite{ettinger2021large}, and legged robotics~\cite{rodriguez2022neural}, motivating principled approaches to safety verification.
The complex interactions between the system dynamics and the neural controller make safety verification challenging, and this challenge is heightened when the dynamics are nonlinear.
In this work, we develop techniques that apply to both linear and nonlinear dynamics, but our exposition focuses on nonlinear dynamics as the harder and more relevant setting.

Current approaches to NFS verification fall broadly into \emph{propagative} and \emph{combinatorial} families~\cite{akinwande2026polyhedralenclosuresefficientcombinatorial}. Propagative solvers represent nonlinear dynamics using control-native abstractions such as Taylor models~\cite{dutta2019reachability,huang2022polar}, Bernstein polynomials~\cite{fan2020reachnn}, polynomial zonotopes~\cite{kochdumper2023open}, and inclusion functions~\cite{harapanahalli2024immrax}. Monolithic variants such as CORA~\cite{kochdumper2023open}, ReachNN~\cite{fan2020reachnn}, and POLAR~\cite{huang2022polar} encode the neural network directly in these abstractions, enabling joint reasoning but limiting neural-network analysis to control-native machinery. Compositional variants such as immrax~\cite{harapanahalli2024immrax} and CROWN-Reach~\cite{manzanas_lopez2024arch,crown_reach} analyze the network and the dynamics separately, gaining scalability on the network but requiring reasoning across distinct abstractions.
Combinatorial solvers (OVERTVerify~\cite{sidrane2022overt}, OvertPoly~\cite{akinwande2026polyhedralenclosuresefficientcombinatorial}) encode both the dynamics and the network as mixed-integer linear programs (MILPs), enabling joint reasoning over a shared abstraction. Like the monolithic propagative tools, they trade scalability for structural uniformity, though here the shared abstraction is network-native rather than control-native. MILP encodings do not scale to large networks, and requiring exact MILP-encodable activations restricts the class of admissible controllers.

Open-loop neural-network verification (certifying the network in isolation rather than in closed loop with the system dynamics) faces neither barrier, as state-of-the-art verifiers combine linear bound propagation (LiRPA)~\cite{xu2020automatic}, often derived from dual LP formulations~\cite{de2021improved}, with branch-and-bound (BaB) over activation-pattern prefixes~\cite{bunel2020branch,xu2020fast}. Bounds are computed in parallel on GPUs, scaling to large networks. This combination closely mirrors MILP solver strategies and suggests compatibility with the abstractions used by combinatorial NFS solvers. 

The key research question in this work is whether open- and closed-loop verification can be unified in a single, scalable framework. We answer affirmatively, with three contributions: $(i)$ \rail, a \textbf{unified graph interface}, exposes polyhedral enclosures of the dynamics to LiRPA, enabling joint reasoning over system dynamics and network activations; $(ii)$ \clipper, a \textbf{structure-aware branch-and-bound} algorithm, jointly refines these enclosures and splits controller activations, with branching heuristics that exploit the structure of both the dynamics and the network while leveraging GPU parallelism; and $(iii)$ an implementation for verifying reach-avoid specifications for nonlinear NFS at \textbf{competitive precision using GPU-accelerated BaB} yields significant improvements over the state of the art.

\section{Background}
\subsubsection{Notation and Definitions}
For a set $\State$, $2^{\State}$ denotes its power set. For integers $i \leq j$, $[i..j] \coloneqq \{ z \mid i \leq z \leq j \}$, and $[n] \coloneqq [1..n]$. If $S = (s_1, \ldots, s_n)$ is a finite sequence, then $S_i$ denotes its $i$-th element. We use $\conv$ to denote the convex hull operator, \nats the nonnegative integers, and \reals the real numbers.
For $\mathbf{v}, \mathbf{w} \in \reals^n$, we write $\mathbf{v} \vleq \mathbf{w}$ to denote $\mathbf{v}_i \leq \mathbf{w}_i$ for every $i \in [n]$.
For a function $f : X \to Y$ and a subset $S \subseteq X$, we write $f|_S$ for the restriction of $f$ to $S$. If a set is used where a point is expected, the function is applied to each element and the resulting sets are unioned.
We use several geometric terms informally and provide formal definitions in Appendix~\ref{app:geometric-defs}.

\subsection{Neural Feedback Systems}
A neural feedback system is a discrete-time dynamical system governed by a neural network controller. Its trajectories are assessed against a goal set $\reach$ and unsafe states $\avoid$.
Formally, the tuple $\langle m,n, \init, \trans, \noise, \ctrl, B, \timestep, \horizon, \reach, \avoid \rangle$ defines a neural feedback system $\dynsys$, where $m,n \in \nats$ are the controller-output ($\ctrl(\state) \in \reals^m$) and state dimensions ($\state \in \reals^n$), $\init \subseteq \reals^n$ the initial states, $\trans = (f_1, \ldots, f_n)$ a vector field with $f_i : \reals^n \to \reals$, $\noise \subseteq \reals^n$ admissible perturbations, $\ctrl : \reals^n \to \reals^m$ a neural network controller, $B \in \reals^{n \times m}$ a controller-input matrix, $\timestep \in \reals$ the time step, $\horizon \in \nats$ the time horizon, $\reach \subseteq \reals^n$ the goal set, and $\avoid : [0..\horizon] \to 2^{\reals^n}$ unsafe states at each step.

The system evolves over $\horizon$ discrete time steps, with total time horizon $\horizon \cdot \timestep$. The next-state function $\next^{\dynsys} : \reals^n \to 2^{\reals^n}$ is defined by
\begin{align}
  \next^{\dynsys}(\state)
  &\coloneqq \{ \state + (\trans(\state) + B\ctrl(\state) + \err)\,\timestep \mid \err \in \noise \}.
  \label{eq:next_state}
\end{align}

For $\State_0 \subseteq \init$, the trajectory $\trajj^{\dynsys}(\State_0) \coloneqq (\State_0, \ldots, \State_\horizon)$ is a sequence of state sets, where $\State_t \coloneqq \next^{\dynsys}(\State_{t-1})$ for $t \in [\horizon]$. The system $\dynsys$ is \emph{safe} if
\begin{align}
  &\forall \state \in \init.\ \exists t \in [0..\horizon].\ \trajj^{\dynsys}(\{\state\})_t \subseteq \reach \label{eq:reach_prop} \\
  &\forall t \in [0..\horizon].\ \trajj^{\dynsys}(\init)_t \cap \avoid(t) = \emptyset \label{eq:avoid_prop}
\end{align}
Equation~\eqref{eq:reach_prop} is the reach property, requiring all trajectories to reach $\reach$, while \eqref{eq:avoid_prop} is the avoid property, requiring all trajectories to avoid unsafe states at each $t$.

\subsection{Polyhedral Enclosures}
Polyhedral enclosures are a combinatorial abstraction for nonlinear NFS verification~\cite{akinwande2026polyhedralenclosuresefficientcombinatorial,akinwande2026fabricstrategyverifyingneural}, underpinning state-of-the-art combinatorial solvers. A bounding set is a tuple $\bset = \langle n, \pset, L, U \rangle$, where $\pset \subseteq \reals^n$ is a grid and $L, U : \pset \to \reals$ are lower and upper bounding functions. Component names follow the bounding set's decoration, so the grid of $\bset^j$ is $\pset_j$.
Given a Delaunay triangulation $\Delta$ of $\pset$ (see Appendix~\ref{app:triangulating_grids}), we obtain a collection of $(n+1)$-dimensional polytopes by lifting each $n$-simplex's $n+1$ vertices $\gridPt$ to the pairs $(\gridPt, L(\gridPt))$ and $(\gridPt, U(\gridPt))$ and taking their convex hull. The \emph{polyhedral enclosure} $\enc(\bset,\Delta) \subseteq \reals^n \times \reals$ is the union of these per-simplex polytopes.

Let $D \coloneqq \conv(\pset)$. The lower and upper surfaces of $\enc(\bset, \Delta)$ define functions $\Ltri, \Utri : D \to \reals$, affine on each simplex of $\Delta$. We say $\bset$ \emph{encloses} $f : D \to \reals$ if $\Ltri(\state) \leq f(\state) \leq \Utri(\state)$ for all $\state \in D$ and every Delaunay triangulation $\Delta$ of $\pset$. Formal definitions appear in Appendix~\ref{app:poly_encs}.
\subsection{Linear Relaxation-Based Perturbation Analysis}
Linear relaxation-based perturbation analysis (LiRPA)~\cite{xu2020automatic}, the basis of state-of-the-art neural network verification, computes linear bounds on network outputs. Given a neural network $\pi : \netDom \to \reals^o$ and $\State \subseteq \netDom$, LiRPA produces $\underline{\mathbf{W}}_o, \overline{\mathbf{W}}_o$ and $\underline{\mathbf{b}}_o, \overline{\mathbf{b}}_o$ such that
\begin{align}
  \forall \state \in \State.\quad \underline{\mathbf{W}}_o \state + \underline{\mathbf{b}}_o \;\vleq\; \pi(\state) \;\vleq\; \overline{\mathbf{W}}_o \state + \overline{\mathbf{b}}_o.
\end{align}
Bounds are computed by composing per-node linear relaxations, which are exact at affine layers and sound but inexact at nonlinearities (see \Cref{prop:lirpa}). Slope parameters $\alpha$ at piecewise-linear activations can be optimized via gradient methods, yielding bounds competitive with LP relaxations ($\alpha$-CROWN) while retaining efficient GPU implementation. When $\State$ is an $\ell_p$-norm ball, the linear bounds concretize to scalar bounds via Hölder's inequality without explicit optimization over $\State$. Formal definitions appear in Appendix~\ref{app:lirpa}.

\subsection{Branch and Bound for Neural Network Verification}
LiRPA bounds are sound but may be too loose to certify a property on their own, leaving verification incomplete. State-of-the-art verifiers recover completeness by combining LiRPA with branch-and-bound (BaB)~\citep{xu2020fast}. BaB first partitions the verification problem into subproblems by splitting the input set $\State$ or fixing the activation pattern of an unstable neuron (one whose pre-activation can be positive or negative over $\State$), then computes LiRPA bounds on each, and finally, either certifies the property once every subproblem is verified, or refutes it once a subproblem yields a counterexample. Bound propagation parallelizes across subproblems, enabling efficient GPU-based implementations. Na\"ive neuron splits may not tighten LiRPA bounds directly, since LiRPA bounds each neuron independently and cannot propagate split constraints. $\beta$-CROWN~\citep{wang2021beta} addresses this by incorporating split constraints into bound propagation via Lagrangian relaxation, with multipliers $\beta \geq 0$ jointly optimized alongside $\alpha$. The split dimension or neuron is selected by a \emph{branching heuristic}. A detailed treatment appears in Appendix~\ref{app:bab}.

\section{The CLIPPER Algorithm}
\label{sec:clipper}
This section develops \clipper in four stages. We first introduce an algorithm called \bart for constructing \emph{relaxed bounding sets}, affine enclosures that bound a polyhedral enclosure over its hyperrectangular domain while leaving slope parameters free. We then define \emph{recursive enclosure refinement} (\rer), which tightens these bounds by subdividing cells. \clipper is then a branch-and-bound procedure over graphs, whose nodes are either neurons in a neural network or polyhedral enclosures constructed using the \rer and \bart algorithms. We call the nodes in this graph \rail nodes.
\Cref{fig:nfg} shows one time step of such a graph. In what follows, proofs are omitted due to space constraints (they can be found in Appendix~\ref{app:proofs}).

\begin{figure*}[t]
    \centering
{%
\definecolor{nfgCtrl}{HTML}{2E5C8A}
\definecolor{nfgRail}{HTML}{C77A2E}
\definecolor{nfgPwl}{HTML}{A05E92}      
\definecolor{nfgNonlin}{HTML}{111111}
\definecolor{nfgAction}{HTML}{B22222}
\definecolor{nfgLirpaBg}{HTML}{F0F4FA}
\definecolor{nfgLirpaBorder}{HTML}{2E5C8A}
\definecolor{nfgCalloutBg}{HTML}{FAF6F0}
\definecolor{nfgCalloutBorder}{HTML}{8A5A2E}
\definecolor{nfgInput}{HTML}{3A3A3A}
\definecolor{nfgState}{HTML}{4A8F6F}

\begin{tikzpicture}[
    >=Stealth,
    font=\small,
    every node/.style={font=\small},
    input/.style={circle, draw=nfgInput, fill=nfgInput!10,
                  minimum size=7mm, inner sep=0pt, thick},
    state/.style={circle, draw=nfgState, fill=nfgState!10,
                  minimum size=8mm, inner sep=0pt, thick},
    neuron/.style={circle, draw=nfgCtrl, fill=nfgCtrl!15,
                   minimum size=5mm, inner sep=0pt},
    actneuron/.style={circle, draw=nfgCtrl, fill=nfgCtrl!40,
                      minimum size=5mm, inner sep=0pt, thick},
    railnode/.style={rectangle, draw=nfgRail, fill=nfgRail!15,
                     minimum width=12mm, minimum height=6mm,
                     thick, rounded corners=1pt},
    combiner/.style={rectangle, draw=black, fill=black!8,
                     minimum width=10mm, minimum height=8mm,
                     thick, rounded corners=1pt},
    edge/.style={->, thick, gray!70},
    callout/.style={rectangle, draw=nfgCalloutBorder, fill=nfgCalloutBg,
                    thick, rounded corners=2pt, inner sep=4pt},
    lirpacallout/.style={rectangle, draw=nfgLirpaBorder, fill=nfgLirpaBg,
                         thick, rounded corners=2pt, inner sep=4pt},
    splitcallout/.style={rectangle, draw=nfgLirpaBorder, fill=nfgLirpaBg,
                         thick, rounded corners=2pt, inner sep=4pt},
]

\node[input] (xt) at (0, 0) {$\x_t$};

\begin{scope}[shift={(2.0, 1.3)}]
    \node[neuron]    (c-i1) at (0, 0.5)  {};
    \node[neuron]    (c-i2) at (0, 0)    {};
    \node[neuron]    (c-i3) at (0, -0.5) {};
    \node[actneuron] (c-h1) at (1.0, 0.75)  {};
    \node[neuron]    (c-h2) at (1.0, 0.25)  {};
    \node[neuron]    (c-h3) at (1.0, -0.25) {};
    \node[neuron]    (c-h4) at (1.0, -0.75) {};
    \node[neuron]    (c-o1) at (2.0, 0.5)  {};
    \node[neuron]    (c-o2) at (2.0, 0)    {};
    \node[neuron]    (c-o3) at (2.0, -0.5) {};
    \foreach \i in {1,2,3} {
        \foreach \h in {1,2,3,4} {
            \draw[edge, gray!40, line width=0.3pt, ->] (c-i\i) -- (c-h\h);
        }
    }
    \foreach \h in {1,2,3,4} {
        \foreach \o in {1,2,3} {
            \draw[edge, gray!40, line width=0.3pt, ->] (c-h\h) -- (c-o\o);
        }
    }
\end{scope}

\node[railnode] (rail1) at (2.8, -0.8) {$\hat{f}_1$};
\node[railnode] (rail2) at (2.8, -1.5) {$\hat{f}_2$};
\node[railnode] (rail3) at (2.8, -2.2) {$\hat{f}_3$};

\node[combiner] (combiner) at (5.6, 0)    {$\oplus$};
\node[input]    (et)       at (5.6, -1.7) {$\epsilon_t$};
\node[state]    (xt1)      at (7.3, 0)    {$\x_{t+1}$};

\draw[edge] (xt) -- (c-i1.west);
\draw[edge] (xt) -- (c-i2.west);
\draw[edge] (xt) -- (c-i3.west);
\draw[edge] (xt) to[bend right=8]  (rail1.west);
\draw[edge] (xt) to[bend right=12] (rail2.west);
\draw[edge] (xt) to[bend right=16] (rail3.west);
\draw[edge] (xt.east) -- (combiner.west);
\draw[edge] (c-o1.east) to[bend left=10] (combiner.north west);
\draw[edge] (c-o2.east) to[bend left=5]  (combiner.north west);
\draw[edge] (c-o3.east) to[bend left=0]  (combiner.north west);
\draw[edge] (rail1.east) to[bend left=5]  (combiner.south west);
\draw[edge] (rail2.east) to[bend left=0]  (combiner.south west);
\draw[edge] (rail3.east) to[bend right=5] (combiner.south west);
\draw[edge] (et) -- (combiner.south);
\draw[edge, thick] (combiner.east) -- (xt1.west);

\begin{scope}[shift={(10.0, -1.6)}]
    \node[callout, minimum width=3.0cm, minimum height=2.2cm] (bart-box) {};
    \node[above, font=\small\bfseries, nfgRail] at (bart-box.north) {BART};
    \begin{scope}[shift={(0, 0)}, xscale=1.1, yscale=1.1]
        \draw[thick, nfgNonlin] plot[smooth, domain=-1:1, samples=50]
            (\x, {0.4*sin(deg(2.5*\x))});
        \draw[thick, nfgPwl]
            (-1, -0.10) -- (-0.6, -0.30) -- (0, 0.10) -- (0.6, 0.50) -- (1, 0.35);
        \draw[thick, nfgPwl]
            (-1, -0.35) -- (-0.6, -0.50) -- (0, -0.10) -- (0.6, 0.30) -- (1, 0.10);
        \draw[thick, nfgRail] (-1, -0.06) -- (1,  0.64);
        \draw[thick, nfgRail] (-1, -0.64) -- (1,  0.06);
    \end{scope}
\end{scope}

\begin{scope}[shift={(14.5, -1.6)}]
    \node[callout, minimum width=3.0cm, minimum height=2.2cm] (rer-box) {};
    \node[above, font=\small\bfseries, nfgAction] at (rer-box.north) {RER};
    \begin{scope}[shift={(0, 0)}, xscale=1.1, yscale=1.1]
        \draw[thick, nfgNonlin] plot[smooth, domain=-1:1, samples=50]
            (\x, {0.4*sin(deg(2.5*\x))});
        \draw[thick, nfgPwl]
            (-1, -0.10) -- (-0.6, -0.30) -- (0, 0.10) -- (0.6, 0.50) -- (1, 0.35);
        \draw[thick, nfgPwl]
            (-1, -0.35) -- (-0.6, -0.50) -- (0, -0.10) -- (0.6, 0.30) -- (1, 0.10);
        \draw[thick, nfgAction, dashed] (-0.6, -0.75) -- (-0.6, 0.7);
        \draw[thick, nfgRail] (-1,   -0.05) -- (-0.6, -0.25);
        \draw[thick, nfgRail] (-1,   -0.40) -- (-0.6, -0.55);
        \draw[thick, nfgRail] (-0.6, -0.25) -- (1,     0.82);
        \draw[thick, nfgRail] (-0.6, -0.55) -- (1,     0.05);
    \end{scope}
\end{scope}

\begin{scope}[shift={(10.0, 1.5)}]
    \node[lirpacallout, minimum width=3.0cm, minimum height=2.2cm] (lirpa-box) {};
    \node[above, font=\small\bfseries, nfgLirpaBorder] at (lirpa-box.north) {LiRPA};
    \begin{scope}[shift={(0, -0.1)}, xscale=1.2, yscale=1.2]
        \draw[thick, nfgNonlin] (-1, 0) -- (0, 0) -- (1, 0.7);
        \draw[thick, nfgLirpaBorder] (-1,  0.10) -- (1, 0.78);
        \draw[thick, nfgLirpaBorder] (-1, -0.40) -- (1, 0.40);
    \end{scope}
\end{scope}

\begin{scope}[shift={(14.5, 1.5)}]
    \node[splitcallout, minimum width=3.0cm, minimum height=2.2cm] (nsplit-box) {};
    \node[above, font=\small\bfseries, nfgAction] at (nsplit-box.north) {Neuron Split};
    \begin{scope}[shift={(0, -0.1)}, xscale=1.2, yscale=1.2]
        \draw[thick, nfgAction, dashed] (0, -0.55) -- (0, 0.95);
        \draw[thick, nfgLirpaBorder] (-1, 0) -- (-0.05, 0);
        \draw[thick, nfgLirpaBorder] (0.05, 0.035) -- (1, 0.7);
    \end{scope}
\end{scope}

\draw[->, thick, nfgCalloutBorder, dashed]
    (rail3.south east) to[bend right=20] (bart-box.south west);

\draw[->, thick, nfgAction, dashed]
    (bart-box.east) --
    node[above, font=\scriptsize, nfgAction] {refinement}
    (rer-box.west);

\draw[->, thick, nfgLirpaBorder, dashed]
    (c-h1.east) to[bend left=15] (lirpa-box.west);

\draw[->, thick, nfgAction, dashed]
    (lirpa-box.east) --
    node[above, font=\scriptsize, nfgAction] {split}
    (nsplit-box.west);

\end{tikzpicture}
}%
\caption{Anatomy of one time step in the neural feedback graph $\nfg$. The state $\x_t$ feeds the controller $\pi_{(t)}$ (top) and dynamics $F^{(t)}$ (bottom), which are composed with an affine combiner with disturbance $\ep_t$ to produce $\x_{t+1}$. Each nonlinear node admits a sound linear relaxation (LiRPA at activations, \bart at \rail enclosure nodes) and a corresponding split or refinement action (neuron splits at activations, \rer refinement at \rail nodes). Callouts illustrate the relaxations (left column) and the actions (right column).}
\label{fig:nfg}
\end{figure*}

\subsection{Polyhedral Enclosure Relaxations}
Polyhedral enclosures and LiRPA share linear structure but operate over different domains. Polyhedral enclosures provide piecewise-affine bounds over simplices, while LiRPA takes linear bounds over $\ell_p$-norm balls (most commonly $\ell_\infty$) and propagates them through neural graphs. Since hyperrectangles generalize $\ell_\infty$ balls, unifying the two requires extending polyhedral enclosures from simplices to hyperrectangular domains.

\subsubsection{Relaxing Enclosures}
We extend polyhedral enclosures to hyperrectangular domains by introducing sound \emph{relaxations}. We call these relaxations \bart (\textbf{B}ounding-set \textbf{A}ffine \textbf{R}elaxation \textbf{T}echnique).
Consider a bounding set $\bset$. Recall that any Delaunay triangulation $\Delta$ of $\pset$ yields an enclosure $\enc(\bset, \Delta)$ whose lower and upper surfaces are piecewise-affine functions. Let $f: \reals^n \to \reals$ be a nonlinear function enclosed by $\bset$ so that $\Ltri(\x) \leq f(\x) \leq \Utri(\x)$.

\begin{definition}[Relaxed Bounding Set]
Let $\bset$ be a bounding set, and let $\al, \au \in \reals^n$. The \emph{relaxation} of $\bset$ with parameters $(\al, \au)$ is the bounding set $\bset_\square = \langle n, \pset_\square, L_\square, U_\square \rangle$, where $\pset_\square$ is the set of $2^n$ corners of $\conv(\pset)$, $L_\square(\p) = \al \cdot \p + \bl$ and $U_\square(\p) = \au \cdot \p + \bu$ with $\bl = \min_{\p \in \pset} (L(\p) - \al \cdot \p)$, and $\bu = \max_{\p \in \pset} (U(\p) - \au \cdot \p)$. We say a bounding set is \emph{relaxed} if it is the relaxation of some bounding set, and \emph{unrelaxed} otherwise.
\end{definition}
\noindent
The box labeled \bart in Figure 1 shows an example with $n=2$: the black curve represents $f$; the purple segments represent $B$; and the orange lines are $L_\square$ and $U_\square$.
We write $\Lc, \Uc$ also for the lower and upper surfaces of any enclosure based on $\bset_\square$: since \Lc and \Uc are affine, $\Lc(\x) = \al \cdot \x + \bl$ and $\Uc(\x) = \au \cdot \x + \bu$ on $\conv(\pset_\square) = \conv(\pset)$, they are the same for every enclosure $\enc(\bset_\square,\Delta)$, regardless of  $\Delta$. The following theorem establishes that $\bset_\square$ encloses $f$ over $\conv(\pset)$.

\begin{theorem}[Soundness of Relaxation]
\label{thm:relaxation}
If $f$ is enclosed by $\bset$, then for any $\al, \au \in \reals^n$, any $\x \in \conv(\pset)$, and any Delaunay triangulation $\Delta$ of $\pset$, $\Lc(\x) \leq \Ltri(\x) \leq f(\x) \leq \Utri(\x) \leq \Uc(\x)$.
\end{theorem}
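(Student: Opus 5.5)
The middle inequalities $\Ltri(\x) \leq f(\x) \leq \Utri(\x)$ are exactly the hypothesis that $\bset$ encloses $f$, holding for every Delaunay triangulation $\Delta$ of $\pset$. So the work lies in the two outer inequalities, $\Lc(\x) \leq \Ltri(\x)$ and $\Utri(\x) \leq \Uc(\x)$. I will prove the lower one; the upper one follows by the symmetric argument, replacing $\min$ with $\max$, flipping the inequalities, and using $\au,\bu$ in place of $\al,\bl$.

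The plan is to compare the two surfaces first at grid points and then extend by piecewise affinity. At each vertex $\p \in \pset$ of $\Delta$, the lower surface of the lifted polytopes satisfies $\Ltri(\p) = L(\p)$: every polytope containing the vertical line over $\p$ is the hull of lifted points whose only points over $\p$ are $(\p,L(\p))$ and $(\p,U(\p))$, since $\p$ is a vertex of each simplex it belongs to. By the definition of $\bl$ as a minimum over $\pset$, we have $\bl \leq L(\p) - \al\cdot\p$, that is, $\Lc(\p) = \al\cdot\p + \bl \leq L(\p) = \Ltri(\p)$ for every $\p \in \pset$.

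To extend this to an arbitrary $\x \in \conv(\pset)$, I use the fact that $\Delta$ covers $\conv(\pset)$. This means $\x$ lies in some $n$-simplex $S$ of $\Delta$ with vertices $\p_0,\dots,\p_n \in \pset$, and we can write $\x = \sum_i \lambda_i \p_i$ with $\lambda_i \geq 0$ and $\sum_i \lambda_i = 1$. On $S$, $\Ltri$ is affine, so $\Ltri(\x) = \sum_i \lambda_i L(\p_i)$. I should check that the lower boundary of $\conv\{(\p_i,L(\p_i)),(\p_i,U(\p_i))\}$ over $S$ is the affine interpolant of the $L(\p_i)$. This holds because any point of the hull over $\x$ is a convex combination whose projection has the same barycentric weights, and each weight is split between $L$ and $U$ with $L \leq U$; that inequality is implicit in the lower and upper surfaces being well defined. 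Since $\Lc$ is affine, $\Lc(\x) = \sum_i \lambda_i \Lc(\p_i) \leq \sum_i \lambda_i L(\p_i) = \Ltri(\x)$.

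The main obstacle is the bookkeeping about what $\Ltri$ and $\Utri$ precisely are, namely that they interpolate $L$ and $U$ at the simplex vertices. That $\Delta$ covers $\conv(\pset)$ is standard for Delaunay triangulations, and I will cite the appendix definitions for it. None of this depends on the choice of $\al$ or $\au$, so the result holds for all parameters, as claimed.
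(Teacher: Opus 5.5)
Your proposal is correct and follows essentially the same route as the paper. Both arguments bound $\Lc(\p) \le L(\p)$ at every grid point via the minimum defining $\bl$. They then extend this to any $\x \in \conv(\pset)$ using barycentric coordinates in a simplex of $\Delta$ containing $\x$, where both $\Lc$ and $\Ltri$ are affine, with the upper bound by symmetry and the middle inequalities given by the enclosure hypothesis. Your check that the lower surface of each lifted polytope interpolates $L$ is a detail the paper treats as definitional. The inequality $L(\p) \le U(\p)$ it relies on is stated explicitly in the definition of a bounding set, not just implicit.
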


This construction leaves $\al$ and $\au$ as free parameters that can be optimized with respect to an arbitrary objective (e.g., tightness). The optimization is amenable to first-order methods, analogous to $\alpha$ optimization in $\alpha$-CROWN. 
The relaxation is sound for any $\al, \au \in \reals^n$, with the tightness objective implicitly bounding the optimal parameters. As $\al$ varies, the affine bound $\Lc$ sweeps the supporting hyperplanes of the convex envelope of $\Ltri$ over $\conv(\pset)$. A single relaxation is thus exact only when $\Ltri$ is affine and loose otherwise. Refinement (next section) restores tightness by relaxing subregions independently, and is exact on any subregion where $\Ltri$ is affine.

Since relaxed bounding sets are sound (\Cref{thm:relaxation}), they compose under the same rules as unrelaxed enclosures, at substantially reduced encoding cost. 
\subsection{Refining Enclosure Relaxations}
The \bart relaxation loosens the tight bounds generated by polyhedral enclosures. When necessary, a degree of tightness can be recovered by partitioning the domain into subregions on which the relaxation is applied independently. This forms the foundation for the branch-and-bound algorithm we develop later.
\subsubsection{Refining Relaxations}
Relaxation produces a bounding set $\bset_\square$ whose enclosure is sound but may be excessively loose, particularly when the curvature of the enclosed function changes over the domain, or when the domain is large. We tighten such enclosures via the \emph{refinement} operation, which subdivides a bounding set into pieces over sub-hyperrectangles of the domain. 
We call this refinement operation \rer (\textbf{R}ecursive \textbf{E}nclosure \textbf{R}efinement).
Each piece can be relaxed independently, yielding a piecewise-affine bound that is no looser than the single relaxation and strictly tighter whenever either piece attains a tighter offset~(\Cref{thm:refinement}).
\begin{definition}[Enclosure Refinement]
Let $\bset$ be a bounding set, and let $H$ be a \emph{grid hyperplane} of $\pset$ (an axis-aligned hyperplane that splits \pset at an interior grid coordinate, formalized in Appendix~\ref{app:grid_hyperplane}). Let $\pset_1, \pset_2$ be the sub-grids of $\pset$ on the two closed sides of $H$, so that $\pset_1 \cup \pset_2 = \pset$ and $\pset_1 \cap \pset_2 = \pset \cap H$. Then $\bset^1 := \langle n, \pset_1, L|_{\pset_1}, U|_{\pset_1} \rangle$ and $\bset^2 := \langle n, \pset_2, L|_{\pset_2}, U|_{\pset_2} \rangle$ are \emph{refinements} of $\bset$ along $H$.
\end{definition}
Refinement tightens enclosures only when combined with relaxation. Refining an unrelaxed bounding set preserves the original enclosure exactly, whereas relaxing the refined pieces can produce a tighter enclosure than the single relaxation on regions where it is not exact.

\begin{theorem}[Soundness of Refinement]
\label{thm:refinement}
Let $\bset$ be a bounding set, let $H$ be a grid hyperplane of $\pset$, and let $\bset^1, \bset^2$ be the refinements of $\bset$ along $H$. Let $\Delta$ be a Delaunay triangulation of $\pset$, and let
\[
    \Delta_j \coloneqq \{\simp \in \Delta \mid \simp \subseteq \conv(\pset_j)\} \quad \text{for } j \in \{1, 2\}.
\]
Each $\Delta_j$ is a Delaunay triangulation of $\pset_j$ by the Delaunay splitting lemma (see Appendix~\ref{app:delaunay_split}). Then:
\begin{enumerate}
    \item \textbf{(Unrelaxed.)} $\enc(\bset^1, \Delta_1) \cup \enc(\bset^2, \Delta_2) = \enc(\bset, \Delta)$.

    \item \textbf{(Relaxed.)} Let $\al, \au \in \reals^n$, let $\bset_\square$ and $\bset_\square^j$ be the relaxations of $\bset$ and $\bset^j$ with the same parameters $(\al, \au)$, and let $\bl^j, \bu^j$ denote the offsets of $\bset_\square^j$. For any Delaunay triangulations $\Delta_\square$ of $\pset_\square$ and $\Delta_\square^j$ of $\pset_\square^j$,
    \[
        \enc(\bset_\square^1, \Delta_\square^1) \cup \enc(\bset_\square^2, \Delta_\square^2) \subseteq \enc(\bset_\square, \Delta_\square),
    \]
    with strict containment whenever $(\bl^j, \bu^j) \neq (\bl, \bu)$ for some $j \in \{1, 2\}$.
\end{enumerate}
\end{theorem}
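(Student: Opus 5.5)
For part 1, I will work simplex by simplex. By definition, $\enc(\bset,\Delta)$ is the union over $\simp \in \Delta$ of $\conv\{(\gridPt, L(\gridPt)), (\gridPt, U(\gridPt)) : \gridPt \in \svert(\simp)\}$. Each simplex of $\Delta$ has its vertices in $\pset$ and lies on one closed side of $H$. Indeed, $H$ passes through grid coordinates, and the Delaunay splitting lemma (Appendix~\ref{app:delaunay_split}) guarantees that no simplex crosses $H$ and that $\Delta = \Delta_1 \cup \Delta_2$. So each $\simp \in \Delta$ lies in some $\Delta_j$, and its vertices lie in $\pset_j$. Because $\bset^j$ uses the restrictions $L|_{\pset_j}, U|_{\pset_j}$, the lifted polytope of $\simp$ is identical whether it is built from $\bset$ or from $\bset^j$. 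Taking unions over $\Delta_1 \cup \Delta_2 = \Delta$ gives equality. The only delicate input is that every simplex lies entirely on one side, and I will take this directly from the splitting lemma.

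For part 2, I will first describe relaxed enclosures explicitly. For a relaxed set $\bset_\square$, the surfaces $\Lc, \Uc$ are affine and independent of $\Delta_\square$. I therefore claim that $\enc(\bset_\square, \Delta_\square) = \{(\x, y) : \x \in \conv(\pset),\ \al\cdot\x + \bl \le y \le \au\cdot\x + \bu\}$. For containment of each per-simplex polytope in this set: the set is convex and contains all lifted vertices. For the reverse: a point $(\x, y)$ with $\x$ in a simplex is a convex combination of the lower and upper surface points above $\x$, and those points are affine combinations of the lifted vertices with the barycentric weights of $\x$. The same description holds for $\bset_\square^j$ over $\conv(\pset_j)$ with offsets $\bl^j, \bu^j$.

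The containment then reduces to offset comparisons. Since $\pset_j \subseteq \pset$, the minimum defining $\bl^j$ ranges over a subset of $\pset$, so $\bl^j \ge \bl$; likewise $\bu^j \le \bu$. Also $\conv(\pset_j) \subseteq \conv(\pset)$. Hence every $(\x, y)$ in piece $j$ satisfies $\al\cdot\x + \bl \le \al\cdot\x + \bl^j \le y \le \au\cdot\x + \bu^j \le \au\cdot\x + \bu$, and it lies in the big enclosure.

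For strictness, suppose without loss of generality that $\bl^1 > \bl$. Since $\pset = \pset_1 \cup \pset_2$, the minimum defining $\bl$ is attained in $\pset_1$ or in $\pset_2$, so $\min(\bl^1, \bl^2) = \bl$, which forces $\bl^2 = \bl$. I then need a point of the big enclosure outside both pieces. My plan is to take $\x$ in the open half-box $\conv(\pset_1) \setminus \conv(\pset_2)$, which is nonempty because $H$ splits at an interior coordinate, and $y = \al\cdot\x + \bl$. Then $y < \al\cdot\x + \bl^1$, so the point is not in piece 1, and $\x \notin \conv(\pset_2)$, so it is not in piece 2. The case $\bu^j \ne \bu$ is symmetric. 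The main obstacle is confirming the explicit description of relaxed enclosures for an arbitrary triangulation of the $2^n$ corners, and I expect the barycentric argument above to settle it.
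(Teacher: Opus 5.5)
Your proposal is correct and follows the paper's proof essentially step for step. For the unrelaxed part you use the partition $\Delta_n = \Delta_{1,n} \sqcup \Delta_{2,n}$ together with $\bset|_\simp = \bset^j|_\simp$; for the relaxed part you use the slab description of relaxed enclosures, the offset comparisons $\bl^j \geq \bl$ and $\bu^j \leq \bu$, and the strictness witness $(\x, \Lc(\x))$ with $\x$ off the other half. There are two small touch-ups. First, the no-crossing fact comes from Lemma~\ref{lem:delaunay_cells} (via Corollary~\ref{cor:no_crossing}), not from the splitting lemma as stated. Second, you say you need your witness to lie in $\enc(\bset_\square, \Delta_\square)$ but never check it; this requires $\Lc \leq \Uc$ on $\conv(\pset)$, which follows from $\Lc \leq L \leq U \leq \Uc$ on $\pset$ and affinity, as the paper notes.
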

\subsection{\rail: Composing Enclosures with LiRPA}
The bounds (\Lc and \Uc) of a relaxed bounding set are affine, matching LiRPA's
native representation. A relaxed bounding set $\brel$ can be
embedded in a LiRPA computational graph as a node with known linear bounds, and
LiRPA's propagation rules can be used to propagate these bounds through the network. We call the
resulting approach \rail (\textbf{R}elaxed \textbf{A}ffine \textbf{I}nterface
for \textbf{L}iRPA). \bart and \rer bound the nonlinear dynamics, and LiRPA
propagates the bounds.
\begin{remark} \bart and \rer relax a multivariate nonlinear function $f : \reals^n \to \reals$ as a single affine enclosure over a hyperrectangular domain, refinable along any grid hyperplane. LiRPA provides no such relaxation. It relaxes a fixed library of primitive operators (e.g., activation functions) and composes the results through the network. \bart and \rer thus extend sound linear relaxation to coupled multivariate functions. \end{remark}
\subsection{Neural Feedback Graphs}
A neural feedback system unrolls over its horizon into a computational graph. This view is implicit in the dependency-graph formalism of
OvertPoly~\cite{akinwande2026polyhedralenclosuresefficientcombinatorial}, which encodes each node as
MILP constraints. The contribution here is to realize the dynamics as \rail nodes, exposing the graph to bound propagation and joint
branch-and-bound over enclosure refinements and neuron
splits.

\begin{definition}[Neural Feedback Graph]\label{def:nfg} Let $\dynsys$ be a neural feedback system. Its \emph{neural feedback graph} \nfg is a directed acyclic graph with input nodes $\x_0 \in \init$ and $\ep_0, \ldots, \ep_{\horizon-1} \in \noise$, and $\horizon$ per-step subgraphs chained through state nodes $\x_1, \ldots, \x_\horizon$. The subgraph at time $t$ takes $\x_t$ and $\ep_t$ as inputs and realizes $\next^\dynsys$ via a controller $\pi_{t}$ computing $\ctrl(\x_t)$, scalar \rail enclosure nodes realizing $F(\x_t)$, and an affine layer computing $\x_{t+1} = \x_t + \bigl(F(\x_t) + B\,\pi_{t}(\x_t) + \ep_t\bigr)\,\delta$. \end{definition}

\begin{proposition}[Soundness of Bound Propagation on \nfg]
\label{prop:graph-soundness}
Let $\nfg$ have a \bart relaxation at each \rail node and a LiRPA bound at each nonlinear node in every $\pi_{(t)}$. Then any application of \rail to $\nfg$ over $\init \times \noise^\horizon$ produces linear bounds on $\state_t$ that overapproximate $\trajj^\dynsys(\init)_t$ for every $t \in [0..\horizon]$.
\end{proposition}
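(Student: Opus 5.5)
The plan is induction over a topological order of $\nfg$, with the invariant that every node $v$ carries sound linear bounds in the graph inputs $\mathbf{z} = (\x_0, \ep_0, \ldots, \ep_{\horizon-1}) \in \init \times \noise^\horizon$. Concretely, we show that for every such $\mathbf{z}$ and the concrete value $v(\mathbf{z})$ obtained by executing the graph,
\[
\underline{\mathbf{W}}_v \mathbf{z} + \underline{\mathbf{b}}_v \vleq v(\mathbf{z}) \vleq \overline{\mathbf{W}}_v \mathbf{z} + \overline{\mathbf{b}}_v .
\]
Two things must be established: that the graph semantics coincide with the system semantics, and that each node type preserves the invariant. The conclusion then follows by reading off the invariant at $\x_t$.

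\textbf{Step 1 (semantics).} We first show by induction on $t$ that
\[
\trajj^\dynsys(\init)_t = \{\x_t(\mathbf{z}) \mid \mathbf{z} \in \init \times \noise^\horizon\}.
\]
The case $t=0$ is immediate. For the inductive step, the affine combiner of Definition~\ref{def:nfg} computes exactly the expression in \eqref{eq:next_state} with $\err = \ep_t$, and $\ep_t$ ranges independently over $\noise$. Hence the image of $\x_{t+1}$ is $\next^\dynsys$ applied to the image of $\x_t$, which is $\trajj^\dynsys(\init)_{t+1}$. So sound bounds on the node $\x_t$ overapproximate $\trajj^\dynsys(\init)_t$.

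\textbf{Step 2 (local soundness of each node type).}
\begin{itemize}
\item \emph{Affine nodes.} These include controller linear layers and the combiner. Substituting the input bounds, splitting coefficients by sign, preserves soundness exactly, as in \Cref{prop:lirpa}.
\item \emph{Controller nonlinearities.} These are sound by \Cref{prop:lirpa}, for any admissible $\alpha,\beta$. Split constraints only restrict the subproblem domain, and they are applied within a partition whose union covers the original input set.
\item \emph{\rail nodes.} The node for $f_i$ receives bounds on $\x_t$ whose concretization we require to lie in $\conv(\pset)$, the hyperrectangle over which the bounding set for $f_i$ is built. By \Cref{thm:relaxation}, $\Lc(\x_t) \le f_i(\x_t) \le \Uc(\x_t)$ for any $\al,\au$, with $\Lc$ and $\Uc$ affine.
\end{itemize}

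Composing an affine relaxation with linear bounds on its input is the standard LiRPA backward-substitution step. Propagating $\al\cdot\x_t + \bl$ backward uses the lower bound of $\x_t$ where $\al$ is positive and the upper bound where it is negative, and this yields a valid linear bound in $\mathbf{z}$. If the node has been refined by \rer, soundness holds on each branch by \Cref{thm:refinement}. The branches partition the domain, so the union of branch results covers every concrete state.

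\textbf{Main obstacle.} The subtle step is the domain precondition at \rail nodes. \Cref{thm:relaxation} only guarantees the enclosure on $\conv(\pset)$, so we need every reachable $\x_t(\mathbf{z})$ to lie in the grid's hull. To handle this, we would first concretize the already-sound bounds on $\x_t$ (by induction they are available before the \rail node is processed) into an interval box. We then require, as part of what "an application of \rail" means, that the bounding set be constructed or checked to have $\conv(\pset)$ contain this box. Any state outside it is excluded from the overapproximation by nothing, so the condition must be imposed explicitly rather than derived.

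With this invariant in hand, induction over the topological order gives sound bounds at every $\x_t$. Step~1 converts these into the claimed overapproximation of $\trajj^\dynsys(\init)_t$ for all $t \in [0..\horizon]$.
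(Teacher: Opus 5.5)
Your proposal is correct and follows essentially the same route as the paper's proof. The paper likewise observes that the chained per-step subgraphs realize $\trajj^{\dynsys}$ on $\init \times \noise^{\horizon}$, takes \bart nodes to be sound by \Cref{thm:relaxation}, controller nonlinearities by \Cref{prop:lirpa}, and the affine combiner to be exact, and then appeals to LiRPA composition on the DAG followed by concretization.

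The paper's proof uses your domain precondition silently: that the concretized bounds on $\x_t$ lie in $\conv(\pset)$ of a bounding set enclosing $f_i$. Elsewhere it arranges this by sizing each enclosure from the concrete pre-pass $\hat{\State}_t$, so stating it explicitly is a gain in rigor rather than a departure. Your remarks on splits and \rer fall outside this proposition and are not needed; note also that snapped refinements cover the domain rather than partition it.
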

\subsection{Branch and Bound via \rail}
\label{sec:rail_bab}
We extend BaB to \nfg with \emph{enclosure refinement}: splitting a state node
$\x_t$ along a coordinate hyperplane, thereby refining every
enclosure that reads the split coordinate.

\begin{definition}[\rail Subproblem]
\label{def:rail-subproblem}
A \rail subproblem for $\nfg$ is a tuple $R = \langle C, \mathcal{H}, \mathcal{B}, \mathcal{N}\rangle$ consisting of an \emph{input cell} $C$ which is a hyperrectangle over the inputs
$(\x_0, \ep_0, \ldots, \ep_{\horizon-1})$ of \nfg; a set
$\mathcal{H}$ of \emph{state constraints} which are axis-aligned half-spaces of the form
$\{\x_t^k \leq \gamma\}$ or $\{\x_t^k \geq \gamma\}$, where $\x_t^k$ indicates the $k$th element of the state vector $\x_t$; a family $\mathcal{B} = \{B^{(i,t)}_\square\}$ of relaxed bounding sets, one for each
\rail enclosure node $f^i_t$ in $F(\x_t)$, each with domain induced by $C$ and
$\mathcal{H}$; and a set
$\mathcal{N}$ of \emph{pre-activation constraints}, one interval per split
neuron across the controllers $\pi_{0}, \ldots, \pi_{(\horizon-1)}$.
\end{definition}
\noindent
The subproblem $R$ covers the trajectories that satisfy its constraints:
inputs in $C$, states in every half-space of $\mathcal{H}$, and controller
pre-activations in every interval of $\mathcal{N}$. Constraints in
$\mathcal{H}$ on input nodes can be absorbed into $C$, since intersecting a
hyperrectangle with an axis-aligned half-space yields a hyperrectangle,
$\mathcal{H}$ thus retains only constraints on interior nodes. A subproblem
admits two split actions, each yeilding two \emph{child} subproblems:
\begin{itemize}
  \item \emph{enclosure refinement}: split a state node $\x_t$, input or
  interior, along a coordinate hyperplane $\{\x^k_t = \gamma\}$. One child
  receives the constraint $\{\x^k_t \leq \gamma\}$ and the other
  $\{\x_k^t \geq \gamma\}$. At $t = 0$, this is equivalent to partitioning $C$ along the hyperplane. Every enclosure that depends on $\x^k_t$ is refined along the grid hyperplane at $\gamma$ when $\gamma$ is one of its grid
  coordinates (\Cref{thm:refinement}), and otherwise \emph{snapped} outwards (yielding an overapproximation)
  to be refined along the nearest grid hyperplane bounding the child's
  constrained region. The controller $\pi_{t}$ also has its input domain   
  tightened.
  \item \emph{neuron split}: partition the pre-activation interval of a
  neuron in some $\pi_{t}$. Each child receives the
  corresponding subinterval in $\mathcal{N}$. For piecewise-linear
  activations, splitting an unstable neuron at zero fixes its phase.
\end{itemize}
Enclosure refinement generalizes input splitting from open-loop neural network verification. An open-loop input split bisects the input region along a chosen dimension (the $t = 0$ case of enclosure refinement). An interior split has no open-loop counterpart. It constrains an interior state node, propagating the constraint to every enclosure that depends on the split coordinate without further branching. Enclosure refinement integrates elements of input splitting and neuron splitting.
The operation is further complicated by grid alignment: refinement yields non-overlapping child domains only when the split hyperplane is a grid hyperplane, and the snapped domains overlap otherwise. Overlapping children have no analog in LiRPA-based BaB, where input split actions partition their domain.
\paragraph{Branching Heuristic}
Integrating enclosure refinements and neuron splits requires a branching heuristic that scores both action types on a common scale. Our heuristic, \muni (\textbf{M}ulti-action \textbf{U}nified \textbf{N}ode \textbf{I}mprovement), uses backward bound propagation to compute the gradient of the objective's bound with respect to each node, and takes the gradient magnitude as the estimated bound improvement of the split action the node admits (enclosure refinement at state and \rail nodes, neuron splits at pre-activation nodes). The split point is the midpoint of the relevant interval, snapped to the nearest grid hyperplane for \rail nodes. \muni generalizes BaBSR~\citep{bunel2020branch}, which scores neuron splits by cheap estimates of bound improvement and is detailed alongside other branching heuristics in Appendix~\ref{app:bab}. We leave more sophisticated heuristics to future work.

We call the resulting algorithm \clipper (\textbf{C}losed-\textbf{L}oop \textbf{I}terative \textbf{P}ropagation with \textbf{P}olyhedral \textbf{E}nclosure \textbf{R}efinement), and we show it in \Cref{alg:bab-rail}. Its soundness follows by induction on the subproblem tree. Refinements preserve the parent enclosure (\Cref{thm:refinement}), relaxations are sound for any slope (\Cref{thm:relaxation}), and neuron splits were previously shown to be sound~\citep{shi2025neural}. A parent is sound if all of its children are, since every parent trajectory lies in some child. The root covers every trajectory over the input cell, so returning \textsc{Verified} certifies the given specification.

\begin{algorithm}[t!]
\caption{\clipper: BaB on \rail}
\begin{algorithmic}[1]
\label{alg:bab-rail}
\REQUIRE Neural feedback graph \nfg, input cell $C_0$, property $\varphi$, branching heuristic \muni
\ENSURE \textsc{Verified} or counterexample $\state^*$
\STATE $\mathcal{B}_0 \gets$ \bart relaxations of the \rail nodes of \nfg
\STATE $\mathcal{R} \gets \{\langle C_0, \emptyset, \mathcal{B}_0, \emptyset \rangle\}$
\WHILE{$\mathcal{R} \neq \emptyset$}
    \STATE Select $R = \langle C, \mathcal{H}, \mathcal{B}, \mathcal{N} \rangle \in \mathcal{R}$ and remove it from $\mathcal{R}$
    \STATE $(\underline{y}, \overline{y}) \gets \textsc{railBound}(\nfg, R)$
    \IF{$[\underline{y}, \overline{y}]$ verifies $\varphi$}
        \STATE \textbf{continue} \COMMENT{$R$ is verified}
    \ELSIF{a counterexample $\state^* \in C$ is found}
        \RETURN $\state^*$
    \ELSE
        \STATE $a^* \gets \arg\max_{a \in \textsc{Actions}(R)} \muni(a, R)$
        \STATE $\mathcal{R} \gets \mathcal{R} \cup \textsc{Split}(R, a^*)$
    \ENDIF
\ENDWHILE
\RETURN \textsc{Verified}
\end{algorithmic}
\end{algorithm}

\section{Forward Reachability Analysis}
\label{sec:reachability-analysis}
Forward reachability analysis verifies a system $\dynsys$ by constructing a sequence of over-approximations $\hat{\State}_0, \ldots, \hat{\State}_\horizon$ with $\hat{\State}_t \supseteq \State_t$ for each $t \in [0..\horizon]$, where $\State_t = \trajj^\dynsys(\init)_t$. Safety is then checked against this sequence.

\begin{proposition}[Sufficient Condition for Safety]
\label{prop:reach-soundness}
Let $\hat{\State}_0, \ldots, \hat{\State}_\horizon$ satisfy $\hat{\State}_t \supseteq \State_t$ for all $t \in [0..\horizon]$. If $\hat{\State}_t \cap \avoid(t) = \emptyset$ for all $t$ and $\hat{\State}_{t^*} \subseteq \reach$ for some $t^* \in [0..\horizon]$, then $\dynsys$ is safe.
\end{proposition}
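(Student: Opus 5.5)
The plan is to check the two safety conditions, \eqref{eq:avoid_prop} and \eqref{eq:reach_prop}, one at a time, each by a direct set-inclusion argument. Throughout, write $\State_t = \trajj^\dynsys(\init)_t$, as in the statement.

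\textbf{Avoid property.} Fix $t \in [0..\horizon]$. By hypothesis, $\State_t \subseteq \hat{\State}_t$. Hence
\[
\State_t \cap \avoid(t) \subseteq \hat{\State}_t \cap \avoid(t) = \emptyset,
\]
which is exactly \eqref{eq:avoid_prop}.

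\textbf{Reach property.} This needs one auxiliary fact: trajectories are monotone with respect to their initial set. Because $\next^\dynsys$ is lifted to sets by taking unions, it is monotone: $S \subseteq S'$ implies $\next^\dynsys(S) \subseteq \next^\dynsys(S')$. An induction on $t$ then shows that
\[
\trajj^\dynsys(\State_0)_t \subseteq \trajj^\dynsys(\State_0')_t \quad \text{whenever } \State_0 \subseteq \State_0' .
\]
The base case $t = 0$ is the hypothesis $\State_0 \subseteq \State_0'$. The inductive step applies monotonicity of $\next^\dynsys$ to the inclusion at step $t-1$.

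Now take any $\state \in \init$. Since $\{\state\} \subseteq \init$, the monotonicity fact gives $\trajj^\dynsys(\{\state\})_{t^*} \subseteq \State_{t^*}$. By hypothesis, $\State_{t^*} \subseteq \hat{\State}_{t^*} \subseteq \reach$. So every $\state \in \init$ has the witness $t = t^*$ in \eqref{eq:reach_prop}.

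The only step that is not immediate is the monotonicity lemma. It relies on the paper's convention that a function applied to a set yields the union of its pointwise images. Everything else is routine inclusion chasing. Note that the reach property requires one common $t^*$ for all initial states, which is stronger than \eqref{eq:reach_prop} needs, so it suffices.
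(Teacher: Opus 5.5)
Your proof is correct and follows the paper's argument. The avoid property comes from intersecting both sides of $\State_t \subseteq \hat{\State}_t$ with $\avoid(t)$, and the reach property from the chain $\State_{t^*} \subseteq \hat{\State}_{t^*} \subseteq \reach$. Your monotonicity lemma, which gives $\trajj^\dynsys(\{\state\})_{t^*} \subseteq \trajj^\dynsys(\init)_{t^*}$ for each $\state \in \init$, makes explicit a step that the paper's ``Similarly'' skips: \eqref{eq:reach_prop} quantifies over single initial states, not over all of $\init$.
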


A reachability procedure returns \emph{verified} when it produces a sequence satisfying the conditions of \Cref{prop:reach-soundness}, and \emph{inconclusive} otherwise; because the check is against over-approximations, an inconclusive result does not imply that $\dynsys$ is unsafe. We define two procedures: \emph{concrete} reachability, the loose default, and \emph{symbolic} reachability, a tighter refinement. Both are evaluation strategies on $\nfg$, invoking \clipper as their bounding oracle.

\begin{table*}[!t]
\centering
\small
\renewcommand{\arraystretch}{1.2}
\setlength{\tabcolsep}{8pt}
\caption{Verification results on ARCH-COMP 2025 AINNCS benchmarks supported by \clipper. Instances are defined in Appendix~\ref{app:benchmarks}. $n$ is the state dimension, $\horizon$ the horizon. Cells report wall-clock seconds to certify, $\times$ denotes an inconclusive result (error or too imprecise), \texttt{U} an unsupported instance, and an \texttt{(F)} prefix a specification that does not hold, whose row instead reports set-computation time. Per instance the fastest tool is \textbf{bold} and the tightest (smallest terminal volume) \underline{underlined}. $\bigstar$ marks our method.}
\label{tab:results}
\begin{tabular}{l c c | c c c c c c}
\hline
Benchmark & $n$ & $\horizon$ & \clipper ($\bigstar$) & OvertPoly & OVERTVerify & immrax & CROWN-Reach & CORA \\
\hline
\begin{filecontents*}{data/results.csv}
benchmark,n,T,crest,overtpoly,overtverify,immrax,crownreach,cora
ACC,6,50,\textbf{\underline{0.5}},12.0,111.0,\underline{10.0},$\times$,60.4
TORA 1,4,20,\textbf{0.7},562.0,984.0,6.1,$\times$,\underline{2.0}
(F) TORA 2,4,10,\textbf{0.3},U,U,12.7,0.8,\underline{17.3}
(F) TORA 3,4,10,\textbf{0.1},U,U,8.1,0.7,\underline{1.4}
Unicycle,4,50,79.1,3941.0,16348.0,$\times$,$\times$,\textbf{\underline{13.3}}
(F) Single Pendulum,2,20,\textbf{0.2},1.1,0.7,3.8,1.1,\underline{0.7}
(F) Airplane,12,20,\textbf{14.6},U,U,\underline{61.1},$\times$,$\times$
Attitude Control,6,30,3.5,U,U,20.0,\textbf{\underline{2.4}},$\times$
Quadrotor,12,50,$\times$,U,U,$\times$,$\times$,$\times$
2D Spacecraft Docking,4,40,$\times$,U,U,$\times$,$\times$,$\times$
Navigation 1,4,30,\underline{865.1},U,U,$\times$,$\times$,\textbf{265.1}
Navigation 2,4,30,105.5,U,U,$\times$,$\times$,\textbf{\underline{7.4}}
\end{filecontents*}
\csvreader[late after line=\\]{data/results.csv}{}{%
\csvcoli & \csvcolii & \csvcoliii & \csvcoliv & \csvcolv & \csvcolvi & \csvcolvii & \csvcolviii & \csvcolix}%
\hline
\end{tabular}
\end{table*}

\subsubsection{Concrete Reachability}
\label{ssec:concrete-reachability}
Concrete reachability evaluates $\nfg$ in the manner of interval bound propagation (IBP) on the time axis. Each $\hat{\State}_t$ is a hyperrectangle, with $\hat{\State}_0$ the bounding box of $\init$. Given $\hat{\State}_t$, each component $\hat{\State}^i_{t+1}, i \in [n]$ of the next state set is bounded by invoking \clipper twice (once per bound direction) on the $i$-th component of $\next^\dynsys$ over $\x \in \hat{\State}_t$ and $\ep \in \noise$, returning an interval $[\underline{g}_i, \overline{g}_i]$. The product $\hat{\State}_{t+1} = \prod_{i=1}^n
[\underline{g}_i, \overline{g}_i]$ is a hyperrectangle and feeds the next step. Cross-coordinate and cross-time correlations are discarded at each step.

\subsubsection{Symbolic Reachability}
\label{ssec:symbolic-reachability}
Symbolic reachability evaluates $\nfg$ by propagating linear bounds in $(\x_0, \ep_0, \ldots, \ep_{\horizon-1})$ across each time step, preserving the cross-time and cross-coordinate correlations that concrete reachability discards. Each per-step subgraph contributes \bart relaxations at the \rail nodes representing $F$, sized from the concrete over-approximations $\hat{\State}_t$, and LiRPA relaxations at the nonlinear nodes in $\pi_{t}$, sized from IBP bounds within the controller.

This scaffolding is common to both traditions \clipper unifies. Within a feedforward network, IBP bounds provide the domains over which LiRPA bounds are defined~\cite{xu2020automatic}, whereas in combinatorial NFS solvers, per-step concrete reachable sets are the abstractions over which the symbolic problem is defined~\cite{sidrane2022overt,akinwande2026polyhedralenclosuresefficientcombinatorial}. Symbolic reachability via \clipper inherits both: a single concrete pre-pass sizes the \bart relaxations of the dynamics and the LiRPA relaxations of the controller across the horizon of $\nfg$.

When \clipper refines a step, the scaffolding must be refined as well. Splitting the state node at step $t^*$ partitions $\hat{\State}_{t^*}$ into cells. The \bart relaxation on a cell $S$ over-approximates $F$ only over $S$, so the bound it induces at a later step is sound only for trajectories whose state at $t^*$ lies in $S$. Each cell therefore receives its own scaffolding, sized by concrete reachability restarted from $S$ rather than by the parent sets $\hat{\State}_t$, and a backward pass over the remaining horizon yields its bounds. Each cell is checked against the specification independently, and the check succeeds when every cell passes. This is \clipper's branch-and-bound applied to reachability analysis, and the cells are the children of an enclosure refinement at $t^*$.

Concrete reachability is the default. However, because it discards cross-time correlations, it may fail to certify challenging reach-avoid specifications. Symbolic reachability then tightens the specific $\hat{\State}_t$ at which the check fails.

\section{Evaluation}
\label{sec:evaluation}
We evaluate \clipper on the ARCH-COMP 2025 AINNCS suite~\citep{manzanas_lopez2025arch}, which comprises 12 reach-avoid instances across nine neural feedback systems (two systems contribute multiple instances). Following the guidelines of the organizers, we discretize the continuous-time plants using forward Euler integration. On AINNCS we compare against the 2025 participants that permit discrete-time analysis, either natively (CORA) or via a fair modification (CROWN-Reach, immrax; \Cref{app:baselines}), together with the combinatorial solvers OvertPoly and OVERTVerify. Continuous-time performance does not automatically transfer to the discrete-time setting, as the resulting closed-loop behavior differs.
Details about the AINNCS benchmarks are provided in Appendix~\ref{app:experimental-setup}.

\begin{figure*}[!t]
\centering
\begin{tikzpicture}
\begin{axis}[
    width=0.48\textwidth, height=3.7cm,
    xlabel={Instances solved}, ylabel={Time (s)},
    ymode=log, xmin=1, xmax=12, ymin=0.1, ymax=100000,
    xtick={1,3,5,7,9,11},
    legend columns=-1, legend to name=cactuslegend,
    legend style={draw=none, font=\footnotesize,
                  /tikz/every even column/.append style={column sep=1.2em}},
    tick label style={font=\footnotesize}, label style={font=\footnotesize},
    unbounded coords=jump,
]
\input{data/cactus_time_plots.tex}
\end{axis}
\end{tikzpicture}
\hfill
\begin{tikzpicture}
\begin{axis}[
    width=0.48\textwidth, height=3.7cm,
    xlabel={Instances solved}, ylabel={Terminal volume},
    ymode=log, xmin=1, xmax=12, ymin=1e-13, ymax=100,
    xtick={1,3,5,7,9,11},
    tick label style={font=\footnotesize}, label style={font=\footnotesize},
    unbounded coords=jump,
]
\input{data/cactus_volume_plots.tex}
\end{axis}
\end{tikzpicture}

\pgfplotslegendfromname{cactuslegend}
\vspace*{0.6em}
\caption{Cactus plots over the 12 AINNCS instances. An instance is \emph{solved} when a tool produces its reachable-set result (a certification, or a set computation on falsifiable instances), and each tool's solved instances are sorted by the plotted quantity. A point $(x,y)$ means $x$ instances were solved within $y$, so lower and farther right is better. \textbf{Left:} per-instance time. \textbf{Right:} terminal reachable-set volume (tightness).}
\label{fig:cactus}
\end{figure*}

\paragraph{Falsifiable AINNCS specifications.} A few AINNCS specifications do not hold. We cross-check this with one sampler over $\init$ (fixed seed and budget) applied identically across tools: a sampled trajectory that intersects $\avoid(t)$ at some $t$ witnesses \eqref{eq:avoid_prop} failing, and one that lies outside $\reach$ at every $t$ suggests \eqref{eq:reach_prop} failing. We report set-computation time on these instances as for the others to measure scalability and tightness on identical dynamics. Such rows are marked \texttt{(F)}.

\paragraph{Coverage.} \clipper solves 10 of the 12 instances (\Cref{tab:results}), certifying six of the eight instances whose specifications hold, and computing reachable sets on all four that do not. No baseline solves an instance that \clipper does not.
CORA is the closest, solving eight. Its strength in the continuous-time setting carries over here, which makes it the state-of-the-art baseline for discrete-time reachability.
The next best is immrax with seven instances, and OvertPoly, OVERTVerify, and CROWN-Reach solve four each.
The combinatorial solvers account for most of the coverage gap, as OvertPoly and OVERTVerify support only 4 of the 12 instances. Their MILP encodings require \ReLU controllers, which rules out seven of the eight instances they leave unsupported (\Cref{app:benchmarks}). \clipper inherits their grid-based enclosure of the dynamics but places no restriction on the controller's activations, and therefore supports all 12 instances. These tools are the closest antecedents to \clipper, and the coverage gain over them is substantial.

\paragraph{Tightness.} Tools that unify dynamics and controller abstractions (CORA and \clipper) lead on both tightness and coverage. CORA attains the smallest terminal volume on six instances and \clipper on two, leaving one each to immrax and CROWN-Reach. The volume cactus plot (\Cref{fig:cactus}, right) sharpens the split: CORA's points lie below \clipper's on most instances, while \clipper's extends two instances further. 
immrax solves Airplane and Attitude Control, which CORA leaves inconclusive, but takes four to six times longer on them than \clipper.
CORA's unified abstraction does not scale with dimension. The two instances \clipper solves and CORA does not, Airplane ($n = 12$) and Attitude Control ($n = 6$), are among the largest in the suite, and we attribute this to a control-native abstraction whose cost grows with the state dimension and the network size.
Conversely, \clipper reports each $\hat{\State}_t$ as an axis-aligned hyperrectangle, which is less expressive than CORA's zonotopes and accumulates wrapping error when the reachable set rotates. This is the likely reason CORA outperforms \clipper on Unicycle and the Navigation instances, whose vector fields rotate the set as the heading turns. \clipper compensates by refining more, which costs time. We leave the exploration of more expressive abstractions to future work.

\paragraph{Time.} \clipper is fastest on most of the instances it solves, and its time cactus plot reflects this (\Cref{fig:cactus}, left). Against the combinatorial solvers, \clipper is faster on every instance they support, by factors between $3.5$ and $1000$. Against the other participants, the margin is smaller and not uniform. CORA is faster on the Unicycle and the Navigation instances, and CROWN-Reach edges \clipper out on Attitude Control.

\begin{table}[h]
\centering
\footnotesize
\caption{Relaxation ablation on \textbf{Unicycle} at $t^\ast = 15$. \emph{Volume} is the reachable-set volume at $t^\ast$; \emph{Time} is the wall-clock time to compute and optimize the relaxation.}
\label{tab:relaxation}
\setlength{\tabcolsep}{4pt}
\begin{tabular}{l c c c}
\hline
 & \bart ($\bigstar$) & \bart$+$OVERT & \bart$+$CROWN \\
\hline
\begin{filecontents*}{data/relaxation.csv}
metric,bart,overt,abcrown
Volume at $t^\ast$,4.07E-06,1.23E-06,1.21E-06
Time (s),12.4,88.3,88.0
\end{filecontents*}
\csvreader[late after line=\\]{data/relaxation.csv}{}{%
\csvcoli & \csvcolii & \csvcoliii & \csvcoliv}%
\hline
\end{tabular}
\end{table}
\subsection{Ablations}
On the \textbf{Unicycle} benchmark, whose vector field couples speed and heading through $x_4 \cos x_3$ and $x_4 \sin x_3$ (Appendix~\ref{app:benchmarks}), we compare our \bart approach against two alternatives that decompose each coupled term into primitives, relax each independently, and compose by backward substitution, as \nfg would if the dynamics used LiRPA's native operators. \bart$+$\emph{OVERT}~\citep{sidrane2022overt} bounds each primitive with the enclosure's grid-derived bound, while \bart$+$CROWN uses LiRPA native relaxations~\citep{shi2025neural}. All three optimize their slopes for the same 50 gradient steps and none refines or splits, so the comparison isolates relaxation quality (\Cref{tab:relaxation}). At that fixed budget, the alternative approaches are $3.3\times$ tighter at $t^\ast$ but $7\times$ slower; the \bart baseline stays within the same order of magnitude at a fraction of the cost. This is consistent with the pattern of enclosure composition being computationally efficient at the cost of precision.

We then ablate \clipper's branching actions (\Cref{tab:ablation}). \emph{Input splitting} restricts refinement to $t = 0$, the open-loop default, while \emph{enclosure refinement} also splits interior state nodes, refining every enclosure that reads them. Both branch until 96 children, isolating where the splits fall. At that fixed budget, enclosure refinement returns a $1.4\times$ to $1.7\times$ tighter terminal set on both benchmarks.\begin{table}[t]
\centering
\footnotesize
\caption{Branching ablation on a benchmark subset. \emph{Time} is wall-clock seconds and \emph{Volume} the terminal reachable-set volume.
$\bigstar$ marks our method.}
\label{tab:ablation}
\setlength{\tabcolsep}{3pt}
\begin{tabular}{l c c c c}
\hline
 & \multicolumn{2}{c}{Enclosure refinement ($\bigstar$)} & \multicolumn{2}{c}{Input splitting} \\
\cline{2-3} \cline{4-5}
Benchmark & Time & Volume & Time & Volume \\
\hline
\begin{filecontents*}{data/ablation.csv}
benchmark,refine_only_time,refine_only_volume,split_only_time,split_only_volume
Unicycle,107.9,5.40E-06,86.1,7.39E-06
Attitude Control,2619.0,2.71E-07,2272.0,4.53E-07
\end{filecontents*}
\csvreader[late after line=\\]{data/ablation.csv}{}{%
\csvcoli & \csvcolii & \csvcoliii & \csvcoliv & \csvcolv}%
\hline
\end{tabular}
\end{table}
\section{Conclusions and Future Work}
\rail exposes polyhedral enclosures of nonlinear dynamics to LiRPA, and \clipper branches over enclosure refinements and neuron splits on the resulting graph, bringing parallelizable bound propagation to neural feedback systems. On ARCH-COMP 2025 AINNCS, \clipper solves more instances than the discrete-time state of the art.
\clipper's main limitation is that it reports each reachable set as axis-aligned hyperrectangles, losing tightness to more expressive representations. Richer representations and stronger branching heuristics are the natural next steps.

\clearpage
\makeatletter\global\vsize\textheight\global\@colroom\textheight\makeatother
\bibliography{refs}
\clearpage
\appendix
\label{page:appendix}
\etocsettocstyle{\section*{Appendix Contents}}{}
\renewcommand{\thesubparagraph}{\thesubsection.\arabic{subparagraph}}
\makeatletter
\def\addcontentsline#1#2#3{%
  \addtocontents{#1}{\protect\contentsline{#2}{#3}{\thepage}{}}%
}
\makeatother
\setcounter{tocdepth}{3}
\setcounter{secnumdepth}{3}
\localtableofcontents
\section{Formal Definitions}
\label{app:formal-defs}
\subsection{Polyhedra}
\label{app:geometric-defs}
Let $\pset = \{\vec{p}_0, \dots, \vec{p}_k\} \subset \reals^n$. The \emph{convex hull} of $\pset$ is
\begin{equation}
    \conv(\pset)
    = \{\theta_0 \vec{p}_0 + \ldots + \theta_k \vec{p}_k \mid \vec{\theta} \cdot \vec{1} = 1,\ \vec{\theta} \vgeq \vec{0}\},
\end{equation}
and the \emph{affine hull} $\aff(\pset)$ is defined identically but without the constraint $\vec{\theta} \vgeq \vec{0}$. $\pset$ is \emph{affinely independent} iff $\{\vec{p}_1 - \vec{p}_0, \dots, \vec{p}_k - \vec{p}_0\}$ is linearly independent, equivalently iff $\dim \aff(\pset) = k$. In general $\dim \aff(\pset) \leq k$. The \emph{polyhedron formed by} $\pset$ is $\conv(\pset)$. A subset of $\reals^n$ is a \emph{polyhedron} if it is the convex hull of a finite set of points in $\reals^n$.

\appsubsubsection{$k$-Simplex}
\begin{definition}
    A polyhedron $\simp$ is a \emph{$k$-simplex} if it is the convex hull of $k+1$ affinely independent points. A polyhedron is a \emph{simplex} if it is a $k$-simplex for some $k$, and $k$ is called its \emph{dimension}.
\end{definition}
\noindent
If $\simp$ is a $k$-simplex, its \emph{vertices}, denoted $\svert(\simp)$, are the unique set $\pset$ of $k+1$ points with $\simp = \conv(\pset)$. A \emph{face} of $\simp$ is the convex hull of any non-empty subset of $\svert(\simp)$.

\appsubsubsection{Simplicial Complex}
\begin{definition}
A \emph{simplicial complex} $\scomp$ is a finite set of simplices such that:
\begin{itemize}
    \item every face of a simplex in $\scomp$ is also in $\scomp$;
    \item every non-empty intersection of two simplices $\simp_1, \simp_2 \in \scomp$ is a face of both $\simp_1$ and $\simp_2$.
\end{itemize}
\end{definition}
\noindent
The \emph{dimension} of $\scomp$ is the largest dimension of any simplex it contains. A simplicial complex of dimension $k$ is a \emph{pure simplicial $k$-complex} if every simplex in $\scomp$ is a face of some $k$-dimensional simplex in $\scomp$.

\appsubsubsection{Point Set Triangulation}
Let $\pset$ be a finite set of points in $\reals^n$. We say $\pset$ is \emph{full-dimensional} if it contains $n+1$ affinely independent points.
\begin{definition}
If $\pset$ is a finite, full-dimensional set of points in $\reals^n$, then a pure simplicial $n$-complex $\scomp$ is a \emph{triangulation of $\pset$} if $\pset = \bigcup_{\simp \in \scomp} \svert(\simp)$ and $\conv(\pset) = \bigcup_{\simp \in \scomp} \simp$.
\end{definition}
\noindent
Let $C$ be a closed $n$-ball, we write $C^O$ for the corresponding open $n$-ball and $C^S$ for the hypersphere forming its surface. For a set of points $\pset$, let $V_\pset(C) = C \cap \pset$. For a polyhedron $\poly$, the \emph{circumsphere} of $\poly$ (when it exists) is the hypersphere passing through every vertex of $\poly$. Circumspheres always exist for simplices and for hyperrectangles.

Let $\pset$ be a finite, full-dimensional set of points in $\reals^n$ and let $\scomp$ be a triangulation of $\pset$. For each $n$-simplex $\simp \in \scomp$, let $C(\simp)$ denote the $n$-ball whose boundary is the circumsphere of $\simp$. We say $\simp$ satisfies the \emph{Delaunay condition}, and call it a \emph{Delaunay simplex of $\pset$}, if $V_\pset(C(\simp)^O) = \emptyset$ (equivalently, the only points of $\pset$ in $C(\simp)$ lie on its surface). $\scomp$ is a \emph{Delaunay triangulation} if every $n$-simplex in $\scomp$ satisfies the Delaunay condition. 

For $\vec{x} \in \conv(\pset)$, let $\simp_\scomp(\vec{x})$ denote the $n$-simplex of $\scomp$ containing $\vec{x}$. If $\vec{x}$ lies on a face of dimension less than $n$ and is therefore contained in multiple $n$-simplices, $\simp_\scomp(\vec{x})$ chooses one in a deterministic way. Writing $\svert(\simp_\scomp(\vec{x})) = \mathbf{p}$ in some fixed order, $ \theta = \theta_\scomp(\vec{x}) \in \reals^{n+1}$ is the unique convex combination vector with $\vec{x} = \theta \cdot \mathbf{p}$.


\appsubsubsection{Grids}
\begin{definition}
A \emph{grid of dimension $n$} is the Cartesian product $G = G_1 \times \dots \times G_n$, where each $G_i \subset \reals$ is a finite set with at least two elements. The \emph{domain} of $G$ is $\dom(G) = \conv(G)$.
\end{definition}
\noindent
A \emph{grid cell} of $G$ is the vertex set $\State$ of an $n$-dimensional hyperrectangle $R \subseteq \dom(G)$ satisfying $G \cap R = \State$; equivalently, the $2^n$ vertices of $R$ all lie in $G$, and no other point of $G$ lies in $R$. For $\state \in G$, we write
$\cells(G, \state) = \{\State \mid \State \text{ is a grid cell of } G,\ \state \in \State\}.$

\appsubsubsection{Grid Hyperplane}
\label{app:grid_hyperplane}
\begin{definition}
Let $\pset = G_1 \times \cdots \times G_n$ be a grid in $\reals^n$. A \emph{grid hyperplane} of $\pset$ is a hyperplane of the form
\[
H := \{\x \in \reals^n : x_i = c\},
\]
for some axis index $i \in \{1, \ldots, n\}$ and $c \in G_i$ with $\min G_i < c < \max G_i$.
\end{definition}

\subsection{Polyhedral Enclosures}
\label{app:poly_encs}
This section reproduces the formal machinery for polyhedral enclosures from the OvertPoly formulation cited in the main paper, we refer the reader there for full definitions and proofs.
\appsubsubsection{Bounding Sets}
\begin{definition}
A \emph{bounding set} is a tuple $\bset = \langle n, \pset, L, U \rangle$ where $n \in \nats$, $\pset$ is a grid of dimension $n$, and $L, U : \pset \to \reals$ satisfy $L(\vec{p}) \leq U(\vec{p})$ for every $\vec{p} \in \pset$. The \emph{domain} of $\bset$ is $\dom(\bset) = \conv(\pset)$.
\end{definition}

\appsubsubsection{Polyhedra formed by Bounding Sets}
\begin{definition}
Let $\bset = \langle n, \pset, L, U \rangle$ be a bounding set. The \emph{lifted vertex set} of $\bset$ is
\[
    V(\bset) := \{(\vec{p}, L(\vec{p})) \mid \vec{p} \in \pset\} \cup \{(\vec{p}, U(\vec{p})) \mid \vec{p} \in \pset\}
\]
where $V(\bset) \subset \reals^{n+1}$. The \emph{polyhedron formed by $\bset$} is
\[
    \poly(\bset) := \conv(V(\bset)).
\]
\end{definition}

\appsubsubsection{Polyhedral Enclosures}
\begin{definition}
Let $\bset = \langle n, \pset, L, U \rangle$ be a bounding set, let $\Delta$ be a Delaunay triangulation of $\pset$, and let $\Delta_n$ denote the set of $n$-simplices in $\Delta$. For each $\simp \in \Delta_n$, the \emph{restriction of $\bset$ to $\simp$} is the bounding set
\[
    \bset|_\simp := \langle n, \svert(\simp), L|_{\svert(\simp)}, U|_{\svert(\simp)} \rangle.
\]
The \emph{polyhedral enclosure formed by $\bset$ and $\Delta$} is
\[
    \enc(\bset, \Delta) := \bigcup_{\simp \in \Delta_n} \poly(\bset|_\simp).
\]
\end{definition}
\noindent
The \emph{lower} and \emph{upper surfaces} of $\enc(\bset, \Delta)$ are the functions $\Ltri, \Utri : \dom(\bset) \to \reals$ given by
\begin{subequations}
\label{eq:surface_defs}
\begin{align}
    \Ltri(\state) &:= \min\{y \in \reals \mid (\state, y) \in \enc(\bset, \Delta)\}, \label{eq:lower_surface} \\
    \Utri(\state) &:= \max\{y \in \reals \mid (\state, y) \in \enc(\bset, \Delta)\}. \label{eq:upper_surface}
\end{align}
\end{subequations}

\appsubsubsection{Function Enclosures}
\begin{definition}
\label{def:funcEnc}
Let $\bset = \langle n, \pset, L, U \rangle$ be a bounding set. A function $f : D \to \reals$ with $\dom(\bset) \subseteq D \subseteq \reals^n$ is \emph{enclosed} by $\bset$ if $(\state, f(\state)) \in \enc(\bset, \Delta)$ for every $\state \in \dom(\bset)$ and every Delaunay triangulation $\Delta$ of $\pset$.
\end{definition}
\noindent
Equivalently, $f$ is enclosed by $\bset$ if and only if $\Ltri(\state) \leq f(\state) \leq \Utri(\state)$ for every $\state \in \dom(\bset)$ and every Delaunay triangulation $\Delta$ of $\pset$.

\subsection{LiRPA Algorithms}
\label{app:lirpa}
This section reproduces the formal machinery for linear relaxation based perturbation analysis (LiRPA) from~\cite{xu2020automatic}, we refer the reader there for full definitions and proofs.

\appsubsubsection{Neural Network Graphs}
\label{app:nn_graph}

\begin{definition}
\label{def:nn_graph}
A \emph{neural network graph} is a tuple $G = (V, E, \{h_i\}_{i \in V})$ where $(V,E)$ is a directed acyclic graph with topologically ordered node set $V = \{1, \ldots, n\}$ and edge set $E \subseteq V \times V$, where $(i,j) \in E$ denotes that the value at $i$ is an input to $j$. For each $i \in V$, write $u(i) = \{j : (j,i) \in E\}$ for the predecessors of $i$ and $d_i \in \nats$ for the output dimension of $i$. Each node carries a (possibly nonlinear) function $h_i$ taking the concatenated predecessor values as input and producing an output in $\reals^{d_i}$. We single out node $1$ as the input node, with $u(1) = \emptyset$ and $h_1(\state) = \state$, and node $n$ as the output node, so that evaluating the graph in topological order on input $\state \in \netDom$ yields $\pi(\state) = h_n$.
\end{definition}

\appsubsubsection{Linear Relaxations of Nodes}
LiRPA decomposes the global bounding problem into per-node \emph{local} linear relaxations, which are then composed across the graph.
\begin{definition}[Local Linear Relaxation]
\label{def:local_relax}
Fix a node $i \in V$ with predecessor values constrained to a hyperrectangle $\mathbf{v} \in [\underline{\mathbf{v}}_{u(i)}, \overline{\mathbf{v}}_{u(i)}]$. A \emph{linear relaxation} of $h_i$ on this domain is a tuple $(\underline{A}_i, \underline{c}_i, \overline{A}_i, \overline{c}_i)$, with matrices $\underline{A}_i, \overline{A}_i$ and vectors $\underline{c}_i, \overline{c}_i$ of dimensions matching the input and output of $h_i$, satisfying, componentwise,
\[
\forall \mathbf{v} \in [\underline{\mathbf{v}}_{u(i)}, \overline{\mathbf{v}}_{u(i)}].\quad
\underline{A}_i \mathbf{v} + \underline{c}_i
\;\leq\; h_i(\mathbf{v})
\;\leq\; \overline{A}_i \mathbf{v} + \overline{c}_i.
\]
For affine $h_i(\mathbf{v}) = W\mathbf{v} + b$ the relaxation is \emph{exact}: $\underline{A}_i = \overline{A}_i = W$ and $\underline{c}_i = \overline{c}_i = b$. For nonlinear $h_i$ (e.g.\ \ReLU, sigmoid, tanh) the relaxation is sound but inexact, and is typically chosen to minimize area between the bounding hyperplanes given the interval $[\underline{\mathbf{v}}_{u(i)}, \overline{\mathbf{v}}_{u(i)}]$.
\end{definition}

\begin{proposition}[Bound Propagation~\citep{xu2020automatic}]
\label{prop:lirpa}
Given a neural network graph $\pi$, with neural graph $G$, an input set $\State \subseteq \netDom$, and a local linear relaxation at each nonlinear node, LiRPA computes matrices $\underline{\mathbf{W}}_o, \overline{\mathbf{W}}_o$ and vectors $\underline{\mathbf{b}}_o, \overline{\mathbf{b}}_o$ such that
\[
\forall \state \in \State.\quad
\underline{\mathbf{W}}_o \state + \underline{\mathbf{b}}_o
\;\leq\; \pi(\state)
\;\leq\; \overline{\mathbf{W}}_o \state + \overline{\mathbf{b}}_o.
\]
The bounds are computed by forward or backward propagation through the graph and admit efficient GPU implementation.
\end{proposition}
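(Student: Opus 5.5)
We prove the statement by backward substitution (the CROWN-style mode), with an outer induction over the topological order of $G$ to produce the intermediate hyperrectangles on which the local relaxations of Definition~\ref{def:local_relax} are valid. The invariant is: for every node $i$ already processed there is a box $[\underline{\mathbf{v}}_i, \overline{\mathbf{v}}_i]$ with $h_i(\state) \in [\underline{\mathbf{v}}_i, \overline{\mathbf{v}}_i]$ for all $\state \in \State$, where $h_i(\state)$ denotes the value of node $i$ on input $\state$. The base case is node $1$, whose box is any box containing $\State$. I assume $\State$ is bounded, e.g.\ an $\ell_p$ ball as in the main text, so that linear functions attain finite extrema on it.

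\textbf{Inner step (one node).} Fix a target node $j$ such that every predecessor of every node preceding $j$ already has a sound box. I maintain a symbolic bound
\[
\textstyle \sum_{k} \Lambda_k\, h_k(\state) + \mathbf{c} \;\leq\; h_j(\state), \qquad \state \in \State,
\]
starting from $\Lambda_j = I$ and $\mathbf{c} = 0$, and nodes are eliminated in reverse topological order. To eliminate node $k$ I replace $\Lambda_k h_k(\mathbf{v})$ by a lower bound that is affine in the predecessor values $\mathbf{v} = h_{u(k)}(\state)$. Componentwise, I split $\Lambda_k = \Lambda_k^+ + \Lambda_k^-$ into its nonnegative and nonpositive parts and use
\[
\Lambda_k h_k(\mathbf{v}) \;\geq\; \Lambda_k^+(\underline{A}_k\mathbf{v}+\underline{c}_k) + \Lambda_k^-(\overline{A}_k\mathbf{v}+\overline{c}_k).
\]
This inequality holds because $\mathbf{v}$ lies in the predecessor boxes, which is exactly where the local relaxation is valid. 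The resulting affine terms are then added into the coefficients $\Lambda_{k'}$ of the predecessors $k' \in u(k)$.

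Processing in reverse topological order matters when a node has several successors. The coefficient $\Lambda_k$ has then received all contributions from its successors before node $k$ is eliminated, so each node is substituted exactly once. Affine nodes are substituted exactly, since $\Lambda^+$ and $\Lambda^-$ recombine to $\Lambda W$. After all nodes except node $1$ are eliminated, $h_1(\state) = \state$ gives $h_j(\state) \geq \underline{\mathbf{W}}_j \state + \underline{\mathbf{b}}_j$. The upper bound is obtained symmetrically by swapping the roles of the lower and upper relaxations.

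\textbf{Closing the induction.} Concretizing these affine bounds over $\State$, for example via Hölder's inequality for an $\ell_p$ ball, gives a sound box for node $j$. This extends the invariant to $j$. Running the inner step at $j = n$ yields the claimed $\underline{\mathbf{W}}_o, \overline{\mathbf{W}}_o, \underline{\mathbf{b}}_o, \overline{\mathbf{b}}_o$.

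The main obstacle is bookkeeping rather than any inequality. The relaxation of node $k$ may be used only when every value $h_{u(k)}(\state)$ lies in the box for which that relaxation was built. This is exactly why the outer induction must supply sound intermediate boxes before the relaxation of node $j$ is chosen. I would state this as a separate lemma on well-defined relaxation domains.

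The remaining details are routine: the sign-splitting inequality, and linearity of accumulating coefficients along multiple paths. Forward propagation follows from the same induction, maintaining affine bounds on each $h_k(\state)$ in terms of $\state$ instead of bounds on the output in terms of $h_k$.
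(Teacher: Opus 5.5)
Your proof is correct and takes essentially the same approach as the paper, which gives no argument of its own and simply defers to \citep{xu2020automatic}; you reconstruct exactly that cited argument, namely backward substitution with sign-split relaxations, using intermediate bounds computed in topological order. Your explicit boundedness assumption on $\State$, together with the lemma that each relaxation is applied only on a box containing the true predecessor values, makes precise hypotheses that the paper's statement leaves implicit.
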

We refer the reader to~\citep{xu2020automatic} for details.
\appsubsubsection{Concretization on $\ell_p$ Input Sets}

When $\State = \{\state : \|\state - \state_0\|_p \leq \epsilon\}$ is an $\ell_p$-ball, scalar (componentwise) bounds on $\pi(\state)$ follow from the linear bounds via Hölder's inequality:
\[
\max_{\state \in \State} \overline{\mathbf{W}}_o \state + \overline{\mathbf{b}}_o
\;=\; \overline{\mathbf{W}}_o \state_0 + \epsilon \,\|\overline{\mathbf{W}}_o\|_{q,\text{row}} + \overline{\mathbf{b}}_o,
\quad \tfrac{1}{p} + \tfrac{1}{q} = 1,
\]
where $\|\cdot\|_{q,\text{row}}$ denotes row-wise dual norm. This avoids any explicit optimization over $\State$.

\appsubsubsection{Optimizable Relaxations ($\alpha$-CROWN)}

For piecewise-linear activations such as \ReLU, the lower relaxation admits a free slope parameter per unstable neuron, which we collect into a vector $\alpha \in [0,1]^k$. Each choice of $\alpha$ yields a sound bound. Tighter bounds are obtained by maximizing the lower bound (resp.\ minimizing the upper bound) over $\alpha$ via projected gradient ascent/descent. The resulting tightened bounds are competitive with LP-relaxation-based verifiers while admitting efficient GPU implementation. We refer to this scheme as $\alpha$-CROWN.

\subsection{Branching for Neural Network Verification}
\label{app:bab}
This section provides background on branch-and-bound (BaB) for neural network verification. We refer the reader to~\citep{bunel2020branch, de2021improved, wang2021beta} for full treatments.

The bounds produced by LiRPA (\Cref{prop:lirpa}) are sound but inexact, and may be too loose to certify a property of interest over $\State$. Branch-and-bound addresses this by partitioning $\State$ (or the network's internal activations) into subdomains on which LiRPA produces tighter bounds, and recursing until each subdomain is either verified or a counterexample is found.

\begin{definition}[Branch-and-Bound for NN Verification]
\label{def:bab}
Given a network $\pi$, an input set $\State$, and a property
$\varphi$ to verify, BaB maintains a set of subproblems
$\mathcal{D} = \{D_1, D_2, \ldots\}$ initialized with
$\mathcal{D} = \{\langle \State, \emptyset \rangle\}$, where each
subproblem $D = \langle X, \mathcal{N} \rangle$ pairs a subset
$X \subseteq \State$ with a set $\mathcal{N}$ of fixed activation
patterns. At each iteration, BaB selects a subproblem
$D \in \mathcal{D}$, computes bounds on $\pi$ over $D$ using a sound
bounding procedure (e.g.\ LiRPA), and either (i) resolves $D$ if
the bounds verify or falsify $\varphi$, or (ii) splits $D$ into
subdomains via a \emph{branching heuristic} and adds them to
$\mathcal{D}$. The procedure terminates when $\mathcal{D}$ is empty
(verified) or a counterexample is found (falsified).
\end{definition}

Splits fall into two families. \emph{Input splits} partition the input set $\State$ directly, typically by bisecting along a chosen input dimension. \emph{Neuron splits} fix the activation pattern of a chosen ReLU (i.e., constrain its pre-activation to be either nonnegative or nonpositive), which tightens the network's relaxation on that subdomain. The choice of which dimension or neuron to split on is the branching heuristic.

\appsubsubsection{Sound Neuron Splits ($\beta$-CROWN)}

A neuron split imposes a linear constraint on a pre-activation, which a naive LiRPA pass cannot exploit since it bounds each neuron independently. $\beta$-CROWN~\citep{wang2021beta} extends LiRPA with per-neuron split constraints using Lagrangians. Each split constraint contributes a multiplier $\beta \geq 0$ that modifies the backward propagation rules at the constrained neuron. For any choice of $\beta \geq 0$, the resulting bounds are sound, but tighter bounds can be obtained by jointly optimizing $\alpha$ and $\beta$ using projected gradient methods in a manner analogous to $\alpha$-CROWN. The procedure can be run in parallel across subdomains in $\mathcal{D}$, which makes GPU enabled BaB with $\beta$-CROWN as the bounding oracle the basis of state-of-the-art complete verifiers.

\appsubsubsection{Input Split Heuristics}

\paragraph{BaBLL (Longest Length, \cite{bunel2020branch}).} The simplest input splitting heuristic is to bisect along the input dimension with the largest current width, $\arg\max_j (\overline{x}_j - \underline{x}_j)$, where $[\underline{x}, \overline{x}]$ is the bounding box of the current subdomain. This heuristic is cheap to compute and dimension-agnostic, but ignores how the network depends on each input.

\paragraph{BaBSB (Smart Branching, \cite{bunel2020branch}).} This heuristic scores each input dimension $j$ using cheap estimates of expected bound improvement from splitting along $j$, and branches on the dimension with the highest score. The estimate typically uses the bound coefficients already computed by LiRPA on the current subdomain, thereby weighting each dimension by its sensitivity in the current bound.

\appsubsubsection{Neuron Split Heuristics}

\paragraph{BaBSR (Smart ReLU~\citep{bunel2020branch}).} Similar to BaBSB, this heuristic assigns each unstable ReLU a score based on a cheap estimate of the objective improvement after splitting it, and selects the neuron with the highest score. BaBSR is cheap but its scores are coarse approximations of the true post-split improvement.

\paragraph{FSB (Filtered Smart Branching)~\citep{de2021improved}.} This heuristic was designed to mimic strong branching on a filtered shortlist. It uses BaBSR to propose a small set of promising candidates, and computes bounds (using a LiRPA pass) to select the candidate yielding the greatest bound improvement. FSB produces higher-quality splits than BaBSR at greater per-step cost.

\paragraph{k-FSB.} This heuristic parametrizes FSB using the shortlist size $k$ 
As implemented in $\alpha,\beta$-CROWN~\citep{wang2021beta}, BaBSR proposes the top $k$ candidates and an FSB-style heuristic selects among them. 
Smaller values of $k$ recover behavior closer to BaBSR, while larger values of $k$ approach full FSB. In practice $k$-FSB with small $k$ trades a small amount of branching quality for substantially lower per-step cost, and is the default branching heuristic in $\alpha,\beta$-CROWN.

\paragraph{BBPS (Propagation with Shortcuts)~\citep{shi2025neural}.} This heuristic generalizes FSB-style scoring to non-ReLU nonlinearities. Rather than relying on ReLU-specific bound estimates, it uses linear bounds propagated from the candidate split node directly to the input as a shortcut for estimating post-split improvement, thereby making branching tractable for networks with general nonlinearities (Sigmoid, Tanh, GeLU, etc.).

\newpage
\section{Auxiliary Results}
\label{app:aux_results}
\subsection{Triangulating Grids}
\label{app:triangulating_grids}
\appsubsubsection{Delaunay Cell Containment}
\label{app:delaunay_cells}
\begin{lemma}
\label{lem:delaunay_cells}
Let $\pset$ be a grid in $\reals^n$ and let $\Delta$ be a Delaunay triangulation of $\pset$. Every $n$-simplex of $\Delta$ is contained in some grid cell of $\pset$.
\end{lemma}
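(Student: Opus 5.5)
The plan is a contradiction argument: if a Delaunay simplex is not contained in any grid cell, we will exhibit a point of $\pset$ strictly inside its circumball, contradicting the Delaunay condition. Write $\pset = G_1 \times \cdots \times G_n$. Let $\simp$ be an $n$-simplex of $\Delta$. Its vertices lie in $\pset$. Let $C(\simp)$ be its circumball, with center $\mathbf{z}$ and radius $\rho$. The Delaunay condition gives $V_\pset(C(\simp)^O) = \emptyset$.

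\emph{Step 1 (reduction to coordinate projections).} For each axis $i$, let $a_i$ and $b_i$ be the minimum and maximum of the $i$-th coordinates of the vertices of $\simp$. Since $\simp$ is full-dimensional, $a_i < b_i$ for every $i$. We claim that no value of $G_i$ lies strictly between $a_i$ and $b_i$. Granting the claim, $a_i$ and $b_i$ are consecutive elements of $G_i$. The box $R = \prod_i [a_i, b_i]$ then has all $2^n$ corners in $\pset$ and contains no other grid point, so its corner set is a grid cell. Moreover $\simp \subseteq R$, because $R$ is convex and contains every vertex of $\simp$. This proves the lemma.

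\emph{Step 2 (the key exchange argument).} Suppose instead that some $c \in G_i$ satisfies $a_i < c < b_i$. Pick vertices $\mathbf{p}, \mathbf{q}$ of $\simp$ with $\mathbf{p}_i = a_i$ and $\mathbf{q}_i = b_i$. Because $c$ lies strictly between $a_i$ and $b_i$, we have
\[
|c - \mathbf{z}_i| < \max\bigl(|a_i - \mathbf{z}_i|, |b_i - \mathbf{z}_i|\bigr).
\]
Let $\mathbf{v} \in \{\mathbf{p}, \mathbf{q}\}$ be a vertex attaining this maximum. Define $\mathbf{r}$ from $\mathbf{v}$ by replacing its $i$-th coordinate with $c$. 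Then $\mathbf{r} \in \pset$, since its other coordinates come from a grid point and $c \in G_i$. Since $\mathbf{v}$ lies on the circumsphere,
\[
\|\mathbf{r} - \mathbf{z}\|^2 = \|\mathbf{v} - \mathbf{z}\|^2 - (\mathbf{v}_i - \mathbf{z}_i)^2 + (c - \mathbf{z}_i)^2 < \rho^2 .
\]
So $\mathbf{r} \in V_\pset(C(\simp)^O)$, contradicting the Delaunay condition.

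The main point requiring care is Step 2, specifically choosing the right vertex so that the inequality is strict. This works because the squared Euclidean distance separates across coordinates, and because $c$ lies in the open interval between the extreme coordinates. The remaining bookkeeping is to confirm that $R$ satisfies the paper's definition of a grid cell, namely that $G \cap R$ is exactly the corner set. This follows because each $G_j$ has no element strictly between $a_j$ and $b_j$.
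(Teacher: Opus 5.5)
Your proof is correct. It reaches the result by a somewhat different route from the paper's, although both rest on the same fact: squared Euclidean distance separates across coordinates, and $\pset$ is a Cartesian product.

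The paper argues outward from the circumcenter $c$. The Delaunay condition places every vertex of $\simp$ in the nearest-point set $N = \arg\min_{p \in \pset}\|p - c\|$. Separability gives $N = N_1 \times \cdots \times N_n$ with each $|N_i| \in \{1,2\}$. Since $N$ contains $n+1$ affinely independent points, a dimension count ($\dim \aff(N)$ equals the number of $i$ with $|N_i| = 2$) forces $|N_i| = 2$ for every $i$. Hence $N$ is exactly the corner set of one cell, and $c$ is its center.

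You argue from the simplex's bounding box instead. Full-dimensionality gives $a_i < b_i$. Your coordinate-exchange step, which is in effect the \emph{argmin of a separable sum is a product of argmins} fact used only where needed, rules out any grid value strictly between $a_i$ and $b_i$. So the box is a grid cell, and containment follows by convexity.

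Your version avoids characterizing $N$ and the affine-dimension count, and it checks explicitly that the box meets the paper's definition of a grid cell. The paper's version yields extra information: the circumsphere of every Delaunay simplex is the circumsphere of its cell. The downstream uses (the no-crossing corollary and the Delaunay splitting lemma) need only containment, so your argument suffices for them.
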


\begin{proof}
Let $\simp \in \Delta$ be an $n$-simplex with circumcenter $c \in \reals^n$ and circumradius $r$. Its vertices lie at distance $r$ from $c$, and by the Delaunay empty-circumsphere property, no point of $\pset$ lies at distance less than $r$ from $c$. Hence the vertices of $\simp$ are contained in $N := \arg\min_{p \in \pset} \|p - c\|$.

Since $\pset = G_1 \times \cdots \times G_n$, the squared distance $\|p - c\|^2 = \sum_i (p_i - c_i)^2$ separates by coordinate, so $N = N_1 \times \cdots \times N_n$ where $N_i := \arg\min_{q \in G_i} (q - c_i)^2$. Each $N_i$ has size $1$ or $2$, with size $2$ exactly when $c_i$ is the midpoint of two adjacent elements of $G_i$. Let $k$ be the number of coordinates $i$ for which $|N_i| = 2$. 
Then $|N| = 2^k$ and $\dim \aff(N) = k$, with $k \leq n$ since there are only $n$ coordinates.

The $n+1$ affinely independent vertices of $\simp$ lie in $N$, so $\dim \aff(\svert(\simp)) = n \leq \dim \aff(N) = k$, hence $n = k$. Thus $c$ is the midpoint of an adjacent pair in every coordinate. That is, $c$ is the center of some grid cell $\cell$, and $N$ is exactly the $2^n$ corners of $\cell$. Therefore $\simp \subseteq \cell$.
\end{proof}

\appsubsubsection{No Crossing Grid Hyperplanes}
\label{app:no_crossing}
\begin{corollary}
\label{cor:no_crossing}
Let $\pset$ be a grid in $\reals^n$, let $H$ be a grid hyperplane of $\pset$, and let $\Delta$ be a Delaunay triangulation of $\pset$. Each $n$-simplex of $\Delta$ is contained in one of the closed half-spaces bounded by $H$.
\end{corollary}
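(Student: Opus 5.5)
The corollary follows directly from \Cref{lem:delaunay_cells}. Given an $n$-simplex $\simp \in \Delta$, the lemma yields a grid cell $\cell$ of $\pset$ with $\simp \subseteq \cell$. The task then reduces to a statement about grid cells: every grid cell lies in one of the two closed half-spaces bounded by $H$.

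Write $H = \{\x : x_i = c\}$ with $c \in G_i$ and $\min G_i < c < \max G_i$. The proof of \Cref{lem:delaunay_cells} shows that $\cell$ is the hyperrectangle $\prod_j [g_j, g_j']$ with $g_j < g_j'$ adjacent elements of $G_j$; equivalently, this follows from the definition of a grid cell, since no other grid point may lie in $R$. So the $i$-th coordinate range of $\cell$ is $[g_i, g_i']$ with no element of $G_i$ strictly between $g_i$ and $g_i'$. If some $\x \in \cell$ had $x_i < c$ and some $\mathbf{y} \in \cell$ had $y_i > c$, then $g_i < c < g_i'$. Because $c \in G_i$, the grid point obtained by replacing the $i$-th coordinate of a corner of $\cell$ with $c$ lies in $R$ but is not a vertex of $R$, contradicting the definition of a grid cell. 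Hence either $g_i' \le c$, placing $\cell$ in $\{x_i \le c\}$, or $g_i \ge c$, placing $\cell$ in $\{x_i \ge c\}$.

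Combining the two steps gives $\simp \subseteq \cell \subseteq$ one closed half-space. The only step needing care is the adjacency claim, namely that the $i$-th edge of a grid cell contains no interior element of $G_i$. That step must come from the emptiness condition in the grid-cell definition and not from the Delaunay property, and the argument above handles it.
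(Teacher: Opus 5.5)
Your proof is correct and takes the same route as the paper: Lemma~\ref{lem:delaunay_cells} places each $n$-simplex inside a grid cell, and a grid cell cannot straddle a grid hyperplane. The paper simply asserts that second step, whereas you derive it from the emptiness condition in the grid-cell definition, which is a worthwhile addition.
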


\begin{proof}
By Lemma \ref{lem:delaunay_cells}, every $n$-simplex of $\Delta$ is contained in a single grid cell. Each grid cell lies in one of the two closed half-spaces bounded by $H$, so the simplex does too, and hence does not cross $H$.
\end{proof}

\appsubsubsection{Splitting Delaunay Triangulations}
\label{app:delaunay_split}
\begin{lemma}
\label{lem:delaunay_split}
Let $\pset$ be a grid in $\reals^n$, let $H$ be a grid hyperplane of $\pset$, and let $\pset_1, \pset_2$ be the sub-grids of $\pset$ on the two closed sides of $H$. Let $\Delta$ be a Delaunay triangulation of $\pset$, and let $\Delta_j := \{\simp \in \Delta \mid \simp \subseteq \conv(\pset_j)\}$ for $j \in \{1, 2\}$. Then $\Delta_1$ and $\Delta_2$ are Delaunay triangulations of $\pset_1$ and $\pset_2$, respectively.
\end{lemma}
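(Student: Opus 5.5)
We must show that each $\Delta_j$ is a triangulation of $\pset_j$ (a pure simplicial $n$-complex whose vertices are exactly $\pset_j$ and whose union is $\conv(\pset_j)$), and that every $n$-simplex in it satisfies the Delaunay condition relative to $\pset_j$. We interpret $\Delta_j$ as the set of $n$-simplices of $\Delta$ contained in $\conv(\pset_j)$, together with all their faces. We treat $j=1$; the case $j=2$ is symmetric.

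\emph{Step 1 (simplicial complex).} Faces of simplices contained in $\conv(\pset_1)$ are again contained in $\conv(\pset_1)$, so $\Delta_1$ is closed under faces. The intersection property is inherited from $\Delta$. Purity follows because every lower-dimensional member is, by construction, a face of an $n$-simplex of $\Delta_1$.

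\emph{Step 2 (covering).} Write $H=\{x_i=c\}$, so that $\conv(\pset_1)=\conv(\pset)\cap\{x_i\le c\}$. By \Cref{cor:no_crossing}, each $n$-simplex of $\Delta$ lies in one closed side of $H$. Let $\x\in\conv(\pset_1)$ with $x_i<c$. Some $n$-simplex of $\Delta$ contains $\x$, and that simplex cannot lie in $\{x_i\ge c\}$, so it lies in $\{x_i\le c\}\cap\conv(\pset)=\conv(\pset_1)$. Hence the open part $\{x_i<c\}\cap\conv(\pset_1)$ is covered by $\bigcup\Delta_1$. The union of finitely many closed simplices is closed, and $\conv(\pset_1)$ is the closure of that open part, since $c>\min G_i$ makes it full-dimensional. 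Therefore $\bigcup\Delta_1=\conv(\pset_1)$.

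\emph{Step 3 (vertices).} Every vertex of a simplex in $\Delta_1$ lies in $\pset\cap\conv(\pset_1)=\pset_1$. Conversely, each $\p\in\pset_1$ is a vertex of some $n$-simplex of $\Delta_1$. To see this, choose a grid cell of $\pset_1$ containing $\p$ and a point $\x$ in that cell's interior, taken close to $\p$. By Step 2, some simplex $\simp\in\Delta_1$ contains $\x$. By \Cref{lem:delaunay_cells}, $\simp$ lies in a single grid cell, which must be this cell since $\x$ is interior to it. Doing this for points $\x$ in a small neighbourhood of $\p$ inside the cell, the finitely many simplices involved cover the neighbourhood and are all contained in the cell. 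A corner point covered by simplices inside the cell must be a vertex of one of them: a corner of a box is an extreme point, so it cannot be a nontrivial convex combination of other cell points. Hence $\p$ is a vertex of some simplex of $\Delta_1$.

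\emph{Step 4 (Delaunay condition).} For $\simp\in\Delta_1$, its circumball $C(\simp)$ satisfies $C(\simp)^O\cap\pset=\emptyset$. Since $\pset_1\subseteq\pset$, we also have $C(\simp)^O\cap\pset_1=\emptyset$.

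The main obstacle is the covering argument in Step 2, specifically passing correctly from the open half to the closed set and handling boundary points on $H$. Steps 1 and 4 are immediate, and Step 3 is a routine extreme-point argument.
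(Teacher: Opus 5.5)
Your proposal is correct and follows the paper's strategy (simplices of $\Delta$ cannot straddle $H$, a closure argument gives $\bigcup\Delta_j=\conv(\pset_j)$, and the Delaunay condition is inherited from $\pset_j\subseteq\pset$), differing only in that you run the closure argument once on the open half-box via \Cref{cor:no_crossing} where the paper runs it cell by cell via \Cref{lem:delaunay_cells}, and in that you explicitly check the complex axioms and the vertex set, which the paper leaves implicit. Do add one line justifying your reading of $\Delta_j$ as ``$n$-simplices plus their faces'', because the lemma defines $\Delta_j$ as all simplices of $\Delta$ inside $\conv(\pset_j)$, and under that definition the vertex set is immediate but purity is not: for such a $\sigma$, a relative-interior point of $\sigma$ lies in some $n$-simplex $\tau\subseteq\conv(\pset_j)$ of $\Delta$ by your Step 2, and $\sigma\cap\tau$ is a face of $\sigma$ containing that point, hence equals $\sigma$, so $\sigma$ is a face of $\tau$ and the two sets coincide.
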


\begin{proof}
By definition, $\pset_j = \pset \cap \overline{H}_j$ for the closed half-spaces $\overline{H}_1, \overline{H}_2$ bounded by $H$. Consequently $\conv(\pset_j)$ is the union of the grid cells of $\pset$ contained in $\overline{H}_j$.

Fix a grid cell $\cell \subseteq \overline{H}_j$, and let $\Delta_\cell := \{\simp \in \Delta : \simp \subseteq \cell\}$. Since $\Delta$ triangulates $\conv(\pset) \supseteq \cell$, the cell $\cell$ is covered by simplices of $\Delta$. By Lemma \ref{lem:delaunay_cells}, each such simplex lies in a single grid cell, and since grid cells have pairwise-disjoint interiors, any simplex with interior points in $\cell$ must be contained in $\cell$. Hence the simplices covering the interior of \cell all belong to $\Delta_\cell$, and $\bigcup_{\simp \in \Delta_\cell} \simp$ is a closed set containing the interior of \cell, hence containing $\cell$.
The corners of $\cell$ lie in $\pset_j$, so $\Delta_\cell \subseteq \Delta_j$, and $\cell \subseteq \bigcup_{\simp \in \Delta_j} \simp$. Taking the union over cells, $\Delta_j$ is a triangulation of $\pset_j$.

Each $\simp \in \Delta_j \subseteq \Delta$ has empty open circumball with respect to $\pset$, hence with respect to $\pset_j \subseteq \pset$. So $\Delta_j$ is a Delaunay triangulation of $\pset_j$.
\end{proof}

\subsection{Proofs of Technical Claims}
\label{app:proofs}
\appsubsubsection{Theorem 1}
\label{app:thm1_proof}
\paragraph{Statement.}
If $f$ is enclosed by $\bset$, then for any $\al,\au \in \reals^n$, any $\x \in \conv(\pset)$, and any Delaunay triangulation $\Delta$ of \pset, $\Lc(\x) \leq \Ltri(\x) \leq f(\x) \leq \Utri(\x) \leq \Uc(\x)$.

\begin{proof}
    Begin by considering the vertices $\p \in \pset$. For any $\p \in \pset$, $\Lc(\p) = \al \cdot \p + \bl$. Since \bl is defined as the minimum, $\Lc(\p) \leq \al \cdot \p + (L(\p) - \al \cdot \p)$. Simplifying yields $\Lc(\p) \leq L(\p)$. Note that this holds for any $\al$.

    Now consider $\x \in \conv(\pset)$. Let $\simp = \simp_\Delta(\x)$ be a simplex in $\Delta$ containing \x. Let $V = (v_0,v_1,\ldots,v_n)$ denote the vertices of \simp, and $\theta = \theta_\Delta(\x)$ denote barycentric coordinates such that $\x = \theta \cdot V$. Then by definition, $L_\simp(\x) = \theta \cdot L(V)$, where $L_\simp$ is the restriction of $\Ltri$ to \simp. Since \Lc is affine, $\Lc(\x) = \theta \cdot \Lc(V)$. But from the previous paragraph, we know that $\Lc(v_i) \leq L(v_i)$ for all $v_i \in V$ (since each $v_i \in \pset$). Therefore $\Lc(\x) \leq L_\simp(\x) = \Ltri(\x)$. The soundness of the upper bound follows from symmetry, and the middle inequalities $\Ltri(\x) \leq f(\x) \leq \Utri(\x)$ are precisely the hypothesis that $f$ is enclosed by \bset.
\end{proof}

\appsubsubsection{Theorem 2}
\paragraph{Statement.}
Let $\bset$ be a bounding set, let $H$ be a grid hyperplane of $\pset$, and let $\bset^1, \bset^2$ be the refinements of $\bset$ along $H$. Let $\Delta$ be a Delaunay triangulation of $\pset$, and let
\[
    \Delta_j \coloneqq \{\simp \in \Delta \mid \simp \subseteq \conv(\pset_j)\} \quad \text{for } j \in \{1, 2\}.
\]
Each $\Delta_j$ is a Delaunay triangulation of $\pset_j$ by the Delaunay splitting lemma (see Appendix~\ref{app:delaunay_split}). Then:
\begin{enumerate}
    \item \textbf{(Unrelaxed.)} $\enc(\bset^1, \Delta_1) \cup \enc(\bset^2, \Delta_2) = \enc(\bset, \Delta)$.

    \item \textbf{(Relaxed.)} Let $\al, \au \in \reals^n$, let $\bset_\square$ and $\bset_\square^j$ be the relaxations of $\bset$ and $\bset^j$ with the same parameters $(\al, \au)$, and let $\bl^j, \bu^j$ denote the offsets of $\bset_\square^j$. For any Delaunay triangulations $\Delta_\square$ of $\pset_\square$ and $\Delta_\square^j$ of $\pset_\square^j$,
    \[
        \enc(\bset_\square^1, \Delta_\square^1) \cup \enc(\bset_\square^2, \Delta_\square^2) \subseteq \enc(\bset_\square, \Delta_\square),
    \]
    with strict containment whenever $(\bl^j, \bu^j) \neq (\bl, \bu)$ for some $j \in \{1, 2\}$.
\end{enumerate}

\begin{proof}
Let $\Delta_n \subseteq \Delta$ denote the set of $n$-simplices of $\Delta$.
By Delaunay Cell Containment (Appendix~\ref{app:delaunay_cells}), every $n$-simplex of $\Delta$ is contained in a grid cell of $\pset$. Each grid cell lies on one side of any grid hyperplane, hence in either $\conv(\pset_1)$ or $\conv(\pset_2)$, so every $n$-simplex of $\Delta$ does too. An $n$-simplex contained in both halves would lie in $\conv(\pset_1) \cap \conv(\pset_2) \subseteq H$, contradicting its dimension. Therefore $\Delta_n = \Delta_{1,n} \sqcup \Delta_{2,n}$, where $\Delta_{j,n} := \Delta_n \cap \Delta_j$.

\textbf{(Unrelaxed.)} For $\simp \in \Delta_n$, let $\bset|_\simp$ denote the restriction of $\bset$ to $\svert(\simp)$, and let $\poly(\bset|_\simp)$ denote the polyhedron formed by $\bset|_\simp$. Then $\enc(\bset, \Delta) = \bigcup_{\simp \in \Delta_n} \poly(\bset|_\simp)$. For $\simp \in \Delta_{j,n}$, the vertices of $\simp$ also lie in $\pset_j$, so $\bset|_\simp = \bset^j|_\simp$. Therefore
\begin{multline*}
    \enc(\bset, \Delta)
    = \bigcup_{\simp \in \Delta_n} \poly(\bset|_\simp) 
    = \bigcup_{j \in \{1,2\}} \bigcup_{\simp \in \Delta_{j,n}} \poly(\bset^j|_\simp) \\
    = \enc(\bset^1, \Delta_1) \cup \enc(\bset^2, \Delta_2).
\end{multline*}

\textbf{(Relaxed.)} Since $\bset_\square$ replaces $L, U$ with the affine functions $\Lc, \Uc$, the lower and upper surfaces of $\enc(\bset_\square, \Delta_\square)$ are $\Lc, \Uc$ on $\conv(\pset_\square) = \conv(\pset)$ independently of $\Delta_\square$. 
Hence
\begin{multline*}
    \enc(\bset_\square, \Delta_\square) = \{(\x, y) \in \reals^n \times \reals \mid \x \in \conv(\pset),\\ \Lc(\x) \leq y \leq \Uc(\x)\},
\end{multline*}
and analogously
\begin{multline*}
    \enc(\bset_\square^j, \Delta_\square^j) = \{(\x, y) \mid \x \in \conv(\pset_j),\\ \Lc^j(\x) \leq y \leq \Uc^j(\x)\}.
\end{multline*}
Since $\pset_j \subseteq \pset$, $\bl^j = \min_{p \in \pset_j}(L(p) - \al \cdot p) \geq \min_{p \in \pset}(L(p) - \al \cdot p) = \bl$. A symmetric argument yields $\bu^j \leq \bu$. Combined with $\conv(\pset_j) \subseteq \conv(\pset)$, this gives $\enc(\bset_\square^j, \Delta_\square^j) \subseteq \enc(\bset_\square, \Delta_\square)$ for $j \in \{1, 2\}$, hence $\enc(\bset_\square^1, \Delta_\square^1) \cup \enc(\bset_\square^2, \Delta_\square^2) \subseteq \enc(\bset_\square, \Delta_\square)$.

For strict containment, suppose $\bl^{j} > \bl$ for some $j \in \{1, 2\}$ (the case $\bu^{j} < \bu$ is symmetric). Pick $\x^*$ in the interior of $\conv(\pset_{j})$ and let $y^* = \Lc(\x^*)$. Then $(\x^*, y^*) \in \enc(\bset_\square, \Delta_\square)$, with the lower-bound inequality holding by construction and the upper-bound inequality since $\Lc \leq L \leq U \leq \Uc$ on $\pset$ gives $\Lc \leq \Uc$ on $\conv(\pset)$ by affinity. But $y^* < \Lc^{j}(\x^*)$ rules out $(\x^*, y^*) \in \enc(\bset_\square^{j}, \Delta_\square^{j})$, and $\x^*$ being interior to $\conv(\pset_{j})$ rules out $\x^* \in \conv(\pset_{j'})$ for $j' \neq j$. 
Therefore $(\x^*, y^*) \in \enc(\bset_\square, \Delta_\square) \setminus \big(\enc(\bset_\square^1, \Delta_\square^1) \cup \enc(\bset_\square^2, \Delta_\square^2)\big)$, so the union is a proper subset of $\enc(\bset_\square, \Delta_\square)$.
\end{proof}

\appsubsubsection{Proposition 1}

\paragraph{Statement.}
Let $\nfg$ have a BART relaxation at each RAIL node and a LiRPA
relaxation at each nonlinear node in every $\pi^{(t)}$. Then any
RAIL evaluation of $\nfg$ over $\init \times \noise^\horizon$
produces linear bounds on $\state_t$ that, when concretized,
sound-overapproximate $\trajj^\dynsys(\init)_t$ for every
$t \in [0..\horizon]$.

\begin{proof}
By \Cref{def:nfg}, the time $t$ subgraph of $\nfg$ realizes
$\next^\dynsys$ as a function of $(x_t, \epsilon_t)$, and chaining
the subgraphs through state nodes $x_1, \ldots, x_\horizon$ realizes
$\trajj^\dynsys$ on $\init \times \noise^\horizon$.

BART relaxations at enclosure nodes are sound by
\Cref{thm:relaxation}, LiRPA relaxations at nonlinear nodes in each $\pi^{(t)}$ are sound by~\Cref{prop:lirpa}, and
the affine layer is exact. Composition of per-node linear bounds
through the graph yields linear bounds on $x_t$ (the LiRPA framework, applied to the directed acyclic structure of
$\nfg$~\cite{xu2020automatic}).
Concretization of these bounds over $\init \times \noise^\horizon$
yields a set $\hat{\state}_t \supseteq \state_t =
\trajj^\dynsys(\init)_t$ for every $t \in [0..\horizon]$.
\end{proof}

\appsubsubsection{Proposition 2}

\paragraph{Statement.}
Let $\hat{\State}_0, \ldots, \hat{\State}_\horizon$ satisfy
$\hat{\State}_t \supseteq \State_t$ for all $t \in [0..\horizon]$.
If $\hat{\State}_t \cap \avoid(t) = \emptyset$ for all $t$ and
$\hat{\State}_{t^*} \subseteq \reach$ for some
$t^* \in [0..\horizon]$, then $\dynsys$ is safe.

\begin{proof}
For each $t \in [0..\horizon]$, $\State_t \subseteq \hat{\State}_t$,
so $\State_t \cap \avoid(t) \subseteq \hat{\State}_t \cap \avoid(t)
= \emptyset$, satisfying~\eqref{eq:avoid_prop}. Similarly,
$\State_{t^*} \subseteq \hat{\State}_{t^*} \subseteq \reach$,
satisfying~\eqref{eq:reach_prop}.
\end{proof}

\newpage
\section{Experimental Setup}
\label{app:experimental-setup}
\subsection{Benchmarks}
\label{app:benchmarks}
We evaluate all tools on the nine ARCH-COMP 2025 AINNCS benchmarks~\citep{manzanas_lopez2025arch}. The benchmarks were originally specified in continuous time. Following the guidance of the organizers, we obtain a discrete-time neural feedback system by forward Euler integration with step $\timestep$. For each benchmark we give a brief description, the tuple $\langle m, n, \init, \trans, \noise, \ctrl, B, \timestep, \horizon, \reach, \avoid \rangle$, and the reach-avoid specification. Where the controller acts on preprocessed features (e.g.\ relative distances) or outputs (e.g.\ $\pi(\state) - 10$), we fold these transforms into $\ctrl$, so that $\ctrl : \reals^n \to \reals^m$ maps the full state to the applied control and $B$ records only which coordinates the control drives.
\paragraph{Adaptive Cruise Control (ACC).}
An ego vehicle tracks a set speed while maintaining a safe distance from a lead vehicle that brakes abruptly. The state $\state = (x_l, v_l, a_l, x_e, v_e, a_e)$ collects the position, velocity, and acceleration of the lead ($l$) and ego ($e$) vehicles. We have $n = 6$, $m = 1$, and
\[
  \trans(\state) = \bigl(v_l,\; a_l,\; -2a_l - 4 - u v_l^2,\; v_e,\; a_e,\; -2a_e - u v_e^2\bigr),
\]
with drag coefficient $u = 10^{-4}$ and the constant lead deceleration $a_{c,l} = -2$ folded into the third component. The controller output enters the ego acceleration through $B = (0,0,0,0,0,2)^\top$. The controller $\ctrl$ is a five-layer network of $20$ ReLU units per layer whose fixed affine input layer forms the features $(v_\text{set}, T_\text{gap}, v_e, x_l - x_e, v_l - v_e)$ from the state, with constants $v_\text{set} = 30$ and $T_\text{gap} = 1.4$. The remaining components are
\[
  \init = [90,110]\times[32,32.2]\times\{0\}\times[10,11]\times[30,30.2]\times\{0\},
\]
$\noise = \{0\}^6$, $\timestep = 0.1$, and $\horizon = 50$ (a $5$\,s window). $\reach = \reals^6$ and $\avoid(t) = \{\state : x_l - x_e < 10 + 1.4\, v_e\}$.

\paragraph{TORA.}
A cart on a frictionless surface is stabilized by a rotational actuator. The state $\state = (x_1, x_2, x_3, x_4)$ collects the cart position, velocity, actuator angle, and angular velocity. We have $n = 4$, $m = 1$, and
\[
  \trans(\state) = \bigl(x_2,\; -x_1 + 0.1\sin x_3,\; x_4,\; 0\bigr).
\]
The control enters the actuator through $B = (0,0,0,1)^\top$, and $\noise = \{0\}^4$. The benchmark provides three controllers, giving instances TORA~1--3 in \Cref{tab:results}. TORA~1 (\emph{remain}) uses a network with three hidden layers of $100$ ReLU units and a linear output, with $\ctrl = \pi(\state) - 10$, $\timestep = 1$, $\horizon = 20$, $\init = [0.6,0.7]\times[-0.7,-0.6]\times[-0.4,-0.3]\times[0.5,0.6]$, $\reach = \reals^4$, and $\avoid(t) = \{\state : \state \notin [-2,2]^4\}$.
TORA~2 (\emph{reach-sigmoid}) and TORA~3 (\emph{reach-tanh}) use networks with three hidden layers of $20$ units, the former with sigmoid activations and $\ctrl = 22\pi(\state) - 11$, the latter with ReLU activations and a tanh output and $\ctrl = 11\pi(\state)$. The remaining parameters are $\timestep = 0.5$, $\horizon = 10$, $\init = [-0.77,-0.75]\times[-0.45,-0.43]\times[0.51,0.54]\times[-0.3,-0.28]$, $\reach = \{\state : x_1 \in [-0.1,0.2],\, x_2 \in [-0.9,-0.6]\}$, and $\avoid \equiv \emptyset$.

\paragraph{Unicycle.}
A unicycle model of a car drives on a plane. The state $\state = (x_1, x_2, x_3, x_4)$ collects the planar position, heading, and speed. We have $n = 4$, $m = 2$, and
\[
  \trans(\state) = \bigl(x_4 \cos x_3,\; x_4 \sin x_3,\; 0,\; 0\bigr).
\]
The controls drive the heading and speed through $B = \begin{psmallmatrix} 0 & 0 \\ 0 & 0 \\ 0 & 1 \\ 1 & 0 \end{psmallmatrix}$, and $\noise = \{0\}^3 \times 10^{-4}[-1,1]$ perturbs the speed. The controller is a single hidden layer of $500$ ReLU units with $\ctrl_i = \pi(\state)_i - 20$, $\timestep = 0.2$, $\horizon = 50$, $\init = [9.5,9.55]\times[-4.5,-4.45]\times[2.1,2.11]\times[1.5,1.51]$, $\reach = [-0.6,0.6]\times[-0.2,0.2]\times[-0.06,0.06]\times[-0.3,0.3]$, and $\avoid \equiv \emptyset$.

\paragraph{Single Pendulum.}
An inverted pendulum is stabilized by a torque at its pivot. The state $\state = (x_1, x_2) = (\theta, \dot\theta)$ collects the angle and angular velocity. We have $n = 2$, $m = 1$, and
\[
  \trans(\state) = \bigl(x_2,\; 2 \sin x_1\bigr).
\]
For parameters $m_p = L = 0.5$, $c = 0$, $g = 1$, the gravity term is $g/L = 2$ and the torque enters the angular acceleration through $B = (0, 8)^\top$ with gain $1/(m_p L^2) = 8$. The controller is a ReLU network with two hidden layers of $25$ units.
The remaining components are $\noise = \{0\}^2$, $\timestep = 0.05$, $\horizon = 20$, $\init = [1.0,1.175]\times[0.0,0.2]$, $\reach = \reals^2$, and $\avoid(t) = \{\state : \theta \notin [0,1]\}$ for $t \in [10,20]$ and $\emptyset$ otherwise.

\paragraph{Airplane.}
A simplified six-degree-of-freedom airplane model has state $\state = (s_x, s_y, s_z, v_x, v_y, v_z, \phi, \theta, \psi, r, p, q)$ of position, velocity, Euler angles, and body rates. We have $n = 12$ and $m = 6$ (three forces and three moments); $\trans$ and $B$ are the equations of motion of~\citep[eq.~10--12]{manzanas_lopez2025arch}, with parameters $m_a = I_x = I_y = I_z = 1$, $I_{xz} = 0$, $g = 1$. The controller is a ReLU network with three hidden layers of $100$, $100$, and $20$ units. With $\timestep = 0.1$ and $\horizon = 20$, the initial set fixes $s_x = s_y = s_z = r = p = q = 0$ and $[v_x, v_y, v_z, \phi, \theta, \psi] \in [0,1]^6$, and the specification requires $s_y \in [-1,1]$ and $\phi,\theta,\psi \in [-1,1]$ throughout.

\paragraph{Attitude Control.}
The attitude of a rigid body is controlled through three torques, with state $\state = (\omega, \psi) \in \reals^3 \times \reals^3$ of angular velocity and Rodrigues parameters. Here $n = 6$, $m = 3$, $\noise = \{0\}^6$, and the angular-velocity dynamics are $\dot\omega = (0.25(u_0 + \omega_2\omega_3),\, 0.5(u_1 - 3\omega_1\omega_3),\, u_2 + 2\omega_1\omega_2)$ with the controls entering through $B$; the $\dot\psi$ block is the kinematics of~\citep[Sec.~V]{manzanas_lopez2025arch}. The controller has three hidden layers of $64$ units with sigmoid activations and a linear output, $\timestep = 0.1$, and $\horizon = 30$. From the initial set of~\citep{manzanas_lopez2025arch} the specification requires avoiding the unsafe box given there: $\reach = \reals^6$ and $\avoid$ is that box for all $t$.

\paragraph{Quadrotor.}
A quadrotor with twelve states (position, velocity, attitude, body rates) is controlled by three torques. We have $n = 12$, $m = 3$, $\noise = \{0\}^{12}$, with $\trans$ and $B$ the dynamics of~\citep[eq.~12--16]{manzanas_lopez2025arch}. The controller has three hidden layers of $64$ units with sigmoid activations and a linear output, $\timestep = 0.1$, and $\horizon = 50$. The initial set has $x_1, \ldots, x_6 \in [-0.4,0.4]$ and $x_7, \ldots, x_{12} = 0$, and the reach specification stabilizes the altitude, $\reach = \{\state : x_3 \in [0.94,1.06]\}$, $\avoid \equiv \emptyset$.

\paragraph{2D Spacecraft Docking.}
A deputy spacecraft docks with a chief under linear Clohessy-Wiltshire relative dynamics, $\state = (s_x, s_y, \dot s_x, \dot s_y)$. Here $n = 4$, $m = 2$, $\noise = \{0\}^4$,
\[
  \trans(\state) = \bigl(\dot s_x,\; \dot s_y,\; 3\eta^2 s_x + 2\eta \dot s_y,\; -2\eta \dot s_x\bigr),
\]
with $\eta = 0.001027$, and the controls enter the accelerations through $B$ scaled by $1/m_s$, $m_s = 12$. The controller has two hidden layers of $256$ units with tanh activations and a linear output, $\timestep = 1$, and $\horizon = 40$. From the initial set $[70,106]^2 \times [-0.28,0.28]^2$ the specification is the velocity-limit constraint $\lVert(\dot s_x, \dot s_y)\rVert \le 0.2 + 2\eta \lVert(s_x, s_y)\rVert$, encoded as $\avoid$ for all $t$.

\paragraph{Navigation.}
A robot navigates to a goal while avoiding an obstacle, $\state = (x, y, \theta, \nu)$. We have $n = 4$, $m = 2$, $\noise = \{0\}^4$,
\[
  \trans(\state) = \bigl(\nu \cos\theta,\; \nu \sin\theta,\; 0,\; 0\bigr), \quad
  B = \begin{psmallmatrix} 0 & 0 \\ 0 & 0 \\ 1 & 0 \\ 0 & 1 \end{psmallmatrix}.
\]
The two controllers (each a ReLU network with two hidden layers of $64$ units and a tanh output, a standard one and an adversarially trained robust one) give instances Navigation~1 and~2. With $\timestep = 0.2$ and $\horizon = 30$, the initial set is $[2.9,3.1]^2 \times \{0\} \times \{0\}$, and the reach-avoid specification has $\reach = \{\state : x, y \in [-0.5,0.5]\}$ and $\avoid(t) = \{\state : x \in [1,2],\, y \in [1,2]\}$.

\subsection{Baselines}
\label{app:baselines}
Every tool analyzes the same discrete-time system. For a benchmark tuple $\langle m, n, \init, \trans, \noise, \ctrl, B, \timestep, \horizon, \reach, \avoid \rangle$, the shared plant is the explicit-Euler map
\[
  \state_{k+1} = \state_k + \timestep\bigl(\trans(\state_k) + B\,\ctrl(\state_k) + \err_k\bigr), \quad \err_k \in \noise,
\]
and each baseline is specialized to that map using its own set representation. Three of the five participants (CORA, immrax, CROWN-Reach) are continuous-time tools in their ARCH-COMP form, and we modify them to run on the discrete-time map. The continuous-time
machinery that exists solely to enclose a continuous flow, and therefore has no discrete counterpart, is dropped, and every other knob is left at the tool's own competition value. CORA needs no such removal, since \texttt{nonlinearSysDT} is a native discrete-time class; immrax needs none either, since \texttt{evolution="discrete"} is a first-class mode; only CROWN-Reach requires a source modification, described below. 

\paragraph{OvertPoly}\hspace{-1em}~\citep{akinwande2026polyhedralenclosuresefficientcombinatorial} (v0.1.1, commit \texttt{7743bc2}) is the combinatorial solver whose grid-based enclosure of the dynamics \clipper also adopts. It encodes the closed loop over the horizon as a single mixed-integer program: the piecewise-linear enclosure of each nonlinear dynamics term contributes binary piece-selection variables, and the \ReLU controller contributes binary activation variables. We run it on Julia 1.10.9 with Gurobi 13.0.2 through JuMP, at the solver's default settings with logging suppressed, setting no optimality gap, thread cap, or time limit. The tool is already discrete-time, so no modification is required.

\paragraph{OVERTVerify}\hspace{-1em}~\citep{sidrane2022overt} (commit \texttt{96c4d95}) is a distinct combinatorial tool that shares OvertPoly's overapproximation of univariate dynamics but differs in the MILP encoding and in how the reachable set is queried. It is likewise natively discrete-time and runs on the same Julia and Gurobi versions with default solver settings.

\paragraph{Controller restrictions in the combinatorial solvers.}
Both MILP encodings require a piecewise-linear controller, which they realize as a big-M encoding of \ReLU activations. This accounts for seven of the eight \texttt{U} cells in \Cref{tab:results}, whose controllers use sigmoid or tanh activations. The eighth, Airplane, has a \ReLU controller, but at $n = 12$ and $m = 6$ the resulting encoding is too large for either tool to handle. \clipper adopts the same grid-based enclosure of the dynamics but relaxes the controller with LiRPA rather than encoding it, and so places no restriction on its activations.

\paragraph{CORA}\hspace{-1em}~\citep{kochdumper2023open} (v2026.1.0, commit \texttt{904a444}) is pinned as a submodule and kept pristine. We build the Euler plant as a \texttt{nonlinearSysDT}, compose the controller with \texttt{neurNetContrSys}, and run reach, specification check, and CORA's own recursive initial-set splitting. We do not call its \texttt{verify} entry point, whose \texttt{simulateRandom} stage can return a falsifying trajectory before any splitting occurs and would report falsification time in place of set computation time. Set-representation options are taken per benchmark from CORA's own ARCH-COMP scripts, though options which have no discrete counterpart are not set.
 
\paragraph{immrax}\hspace{-1em}~\citep{harapanahalli2024immrax} (commit \texttt{af79191}) performs mixed-monotone interval reachability in JAX, bounding the controller with CROWN through \texttt{jax\_verify}. It required no modification: discrete evolution is a first-class mode and the mixed-monotone embedding has a discrete branch, so the shared Euler map is expressed directly as an immrax system and rolled out over the horizon. The only code we wrote is an adapter, including a converter from ONNX to immrax's network format, validated against our own controller loader to within $10^{-4}$. Analysis options match immrax's published neural-network configuration, and where a benchmark warrants initial-set partitioning we use immrax's own pipeline: its documented $15 \times 15$ position grid on the Navigation instances, and a two-cell heading split on Unicycle mirroring the partition \clipper uses there.

\paragraph{CROWN-Reach}\hspace{-1em}~\citep{manzanas_lopez2024arch,crown_reach} (commit \texttt{7b90f30}) bounds the controller with LiRPA and propagates the plant as a Taylor model through Flow$^*$. It is the only participant with no discrete-time mode, and the only one whose source we changed, through three tracked patches against a pinned submodule. The substantive patch replaces Flow$^*$'s continuous flowpipe integration with the shared Euler map applied directly to the Taylor model, which drops the a-priori Picard enclosure and the flow remainder; both exist only to enclose the interior of an integration step, and discrete-map Taylor-model reachability is a standard construction. Taylor-model conservatism is untouched, as remainder growth and wrapping across steps remain and the tool keeps its own order and remainder settings. The remaining patches disable the sampling-based falsifier and add configurations for the instances CROWN-Reach does not ship; every reachability option is the shipped ARCH-COMP value.

\paragraph{Configuration audit.}
Because untuned baselines are the obvious threat to the timing and tightness comparisons, we audited every option our harness sets against the value used by each tool's own competition scripts. CROWN-Reach was already running its shipped configurations verbatim, and immrax matched its published configuration everywhere except Unicycle, where we then gave it the same heading split \clipper uses. CORA was running one global default where its scripts tune per benchmark, which left its verdicts unchanged but inflated its reported volumes; every CORA row in \Cref{tab:results} was regenerated under the per-benchmark configuration. The one remaining gap is ACC, which we run at a lower zonotope order than CORA's script specifies because that value crashes CORA partway through the horizon.

\subsection{Implementation and Compute}
\label{app:implementation}
All AINNCS experiments were run on a single workstation with an AMD Ryzen 9 7950X CPU (16 cores, 32 threads), 64\,GB of RAM, and an NVIDIA GeForce RTX 3060 GPU with 12\,GB of VRAM (driver 580.159.03), running Ubuntu 24.04.4 LTS. \clipper is implemented in Python 3.12.13 against PyTorch 2.8.0 with CUDA 12.6, \texttt{auto\_LiRPA} 0.7.0, and NumPy 2.3.5. The environment is pinned by a \texttt{pixi.lock} file that will be released with the source code. The image-based experiments of \Cref{app:aebs} were run separately on an NVIDIA H100 80\,GB GPU and took $120$ GPU-hours in total. Enclosure-slope optimization runs for 50 gradient steps; the relaxation ablation of \Cref{tab:relaxation} holds this budget fixed across all three relaxations.

Across the AINNCS benchmarks, \muni selects enclosure refinements exclusively and no neuron split is taken. Their controllers are small relative to the dynamics relaxation, and so a refinement always scores higher on the shared scale. This is because the bulk of the bound-sensitive relaxation error comes from the plant. For example, on the unicycle benchmark with 500 neurons, we only encounter 7 ``unstable'' neurons during the initial concrete reachability pass, and neuron splitting is not beneficial there. 
\clearpage
\section{Image-Based Neural Feedback Systems}
\label{app:aebs}
The AINNCS benchmark suite is limited to systems with small, state-based controllers. This favors tools with control-native abstractions, though \clipper remains competitive regardless. To further highlight the benefits of our approach, we evaluate \clipper on the advanced emergency braking system of~\citet{cai2025scalable}, which uses a image based controller. 
\subsection{The Advanced Emergency Braking System}
The state is the distance to a stopped obstacle and the ego velocity, $\state = (d, v) \in [0,60] \times [0,30]$, and the plant is linear, $d^+ = d - v\timestep$ and $v^+ = v - a\timestep$ with $\timestep = 0.05$ and a braking deceleration $a$ affine in the controller output. The perception system is replaced by a GAN conditioned on $d$, carrying four latent variables bounded by $10^{-2}$ in magnitude and redrawn at every control step. The network maps $(d, z, v)$ to a braking force through a generator and a controller composed into one convolutional network. The specification requires the vehicle to stop before $d$ reaches zero.

\subsection{Inconclusive Instances}
\citet{cai2025scalable} partition the state space into a $100 \times 100$ grid and prune it with $5000$ simulations per cell, then prove surviving cells safe by unrolling $m$ control steps into a single composed network, bounding it with $\alpha,\beta$-CROWN, and re-abstracting the result back to grid cells. Coverage grows with $m$, but so does the composed network, and $m = 3$ is the ceiling they report. On the $20$\,Hz convolutional controller this leaves $3868$ cells ($38.7\%$ of the space), that are neither falsified by simulation, nor proven safe at any $m$ they can reach (\Cref{tab:aebs}). 

\subsection{Reducing Inconclusive Cells}
\clipper targets these cells directly and does not re-run the verificed/falsified cells. We discuss the three routes close a cell, and \Cref{tab:aebs} reports how many cells each one settles.

\paragraph{Kinematic lemma.}
We can exploit two facts about the benchmark to resolve inconclusive cells without network analysis. The first is that the controller's output layer clamps outputs to $[0,1]$. The second is that the velocity dynamics are affine, so every step sheds at least the speed the drag term alone removes. A cell is therefore safe whenever its clearance exceeds the distance it could travel under the floor set by the most adversarial controller. By construction, the worst case control is zero, and we can evaluate the floor in microseconds. 

\paragraph{Concrete reachability.}
One step of the composed network and plant is bounded with \clipper from a state box and the latent box, giving a box that contains the true one-step image, and iterating this yields a certified tube. A cell closes when the velocity box falls to zero with clearance maintained throughout, or when clearance is maintained through the step bound the kinematic lemma floor supplies. 
\paragraph{Symbolic reachability.}
When that is not sufficient, \clipper bounds the unrolled composition from the failing step back to the cell in a single backward pass, carrying one latent symbol per step and re-boxing nowhere in between. Every intermediate pre-activation interval is intersected with the one the concrete pass computed for the same node, so the joint bound is no looser than the concrete bound it replaces, and we report the better of the two at each step. 

We branch on input and interior state nodes. We expect the cells that remain to need branching on the controller's activations alongside these routes, an action \clipper supports but which \muni does not select on any system reported here.

\begin{table}[t]
\centering
\footnotesize
\caption{The $20$\,Hz convolutional configuration of~\citet{cai2025scalable} and the part of its inconclusive band \clipper resolves. Counts are cells of their $100 \times 100$ grid, and their reported cost for this configuration is $374$ GPU-hours normalized to a single machine. The three routes overlap, so the rows attributing cells to them need not sum to the total resolved.}
\label{tab:aebs}
\setlength{\tabcolsep}{4pt}
\begin{tabular}{l r}
\hline
Cells in the state space & $10{,}000$ \\
Unsafe by simulation & $3463$ \\
Proven safe, $1$-step & $0$ \\
Proven safe, $2$-step & $384$ \\
Proven safe, $3$-step & $2669$ \\
Inconclusive after $3$-step & $3868$ \\
\hline
Settled by the kinematic lemma & 1105 \\
Settled by concrete reachability & 44 \\
Settled by symbolic reachability & 9 \\
Resolved by \clipper ($\bigstar$) & 1116 \\
Left inconclusive & 2752 \\
\hline
\end{tabular}
\end{table}

\end{document}